\newtheorem{thm}{Theorem}
\newtheorem{lem}[thm]{Lemma}
\newtheorem{prop}{Proposition}
\title{Function Spaces Without Kernels: Learning Compact Hilbert Space Representations}
\author{Su Ann Low \\
University of Texas at Austin\\
Austin, TX, USA \\
\And
Quentin Rommel \\
University of Texas at Austin\\
Austin, TX, USA \\
\And
Kevin S. Miller \\
Brigham Young University\\
Provo, UT, USA \\
\And
Adam J. Thorpe\thanks{Corresponding author A.~Thorpe, \texttt{adam.thorpe@austin.utexas.edu}.} \\
University of Texas at Austin\\
Austin, TX, USA \\
\AND
Ufuk Topcu \\
University of Texas at Austin\\
Austin, TX, USA \\
}
\begin{document}

\maketitle

\begin{abstract}
Function encoders are a recent technique that learn neural network basis functions to form compact, adaptive representations of Hilbert spaces of functions. We show that function encoders provide a principled connection to feature learning and kernel methods by defining a kernel through an inner product of the learned feature map. This kernel-theoretic perspective explains their ability to scale independently of dataset size while adapting to the intrinsic structure of data, and it enables kernel-style analysis of neural models. Building on this foundation, we develop two training algorithms that learn compact bases: a progressive training approach that constructively grows bases, and a train-then-prune approach that offers a computationally efficient alternative after training. Both approaches use principles from PCA to reveal the intrinsic dimension of the learned space. In parallel, we derive finite-sample generalization bounds using Rademacher complexity and PAC-Bayes techniques, providing inference time guarantees. We validate our approach on a polynomial benchmark with a known intrinsic dimension, and on nonlinear dynamical systems including a Van der Pol oscillator and a two-body orbital model, demonstrating that the same accuracy can be achieved with substantially fewer basis functions. This work suggests a path toward neural predictors with kernel-level guarantees, enabling adaptable models that are both efficient and principled at scale.
\end{abstract}

\section{Introduction}

Learning methods face a persistent trade-off between computational efficiency and theoretical guarantees. 
Neural networks learn flexible representations and scale effectively to massive datasets, but their theoretical guarantees remain limited. Existing neural network bounds often rely on restrictive assumptions or yield vacuous estimates. 
Kernel methods provide precise statistical guarantees and well-developed theory, but scale poorly in practice. 
The source of this limitation lies in the dual formulation: kernel solutions are linear combinations of the training data, meaning inference cost is proportional to the number of training points $m$. 
Many applications, such as robotics and scientific modeling, demand both: scalable predictors that also come with theoretical guarantees. 

We study function encoders, a recent technique in transfer and representation learning that bridges this gap by learning neural basis functions that act as explicit feature maps in the primal \citep{ingebrand2025function}. Function encoders learn a finite set of basis functions $\lbrace \psi_j \rbrace_{j=1}^n$ that define an explicit feature map $\phi(x) = [\psi_1(x), \ldots, \psi_n(x)]^\top$. Any function in the span of these basis functions can be written as $\hat f(x) = \langle c, \phi(x)\rangle$ for some coefficient vector $c \in \mathbb{R}^n$, which is computed by solving a regularized least-squares problem. The key points are that the features are learned, not chosen, and inference cost depends only on the number of basis functions $n$, not the dataset size $m$. This makes function encoders computationally comparable to linear models in $n$ dimensions, while retaining the structure of kernel-based approaches.

Our results establish the theoretical role of function encoders.
Function encoders can be understood from both the primal and dual perspectives. In the primal space, they provide an explicit feature map $\phi(x)$ that supports efficient linear prediction with cost $\mathcal{O}(n)$ per test point. In the dual space, the inner product $\langle \phi(x), \phi(x') \rangle$ corresponds to a kernel evaluation. 
This dual perspective makes it possible to analyze function encoders with the same theoretical tools used for kernels and design flexible neural training algorithms.

We present three key contributions: 
\begin{enumerate*}[label=(\arabic*)]
    \item 
    We connect function encoders to feature learning and kernel methods by showing that function encoders define a kernel through the inner product of the learned feature map. This perspective unifies neural basis learning with kernel theory and explains why function encoders scale while retaining structure.
    \item 
    We develop two training algorithms based on principal component analysis (PCA) for learning compact bases: a progressive training approach that constructively grows bases and a train-then-prune approach that removes them after training. 
    \item 
    We derive finite-sample generalization bounds using Rademacher complexity and PAC-Bayes analysis, giving inference-time guarantees for neural predictors.
\end{enumerate*}
We validate our approach on an illustrative polynomial benchmark with a known intrinsic dimension, and demonstrate the capability of our approach on two nonlinear dynamical systems: a Van der Pol oscillator and a two-body model. Our results show that using our approach, we retain the same accuracy as an overparameterized model, but with significantly fewer basis functions. 

\section{Related Work}

\textbf{Function encoders: }
Our work builds on the recent formulation of function encoders introduced in \citet{ingebrand2025function, pmlr-v235-ingebrand24a, NEURIPS2024_7ce9df1d}. 
Function encoders have applications in robotics \citep{ward2025online, ward2025zeroautonomyrealtimeonline}, dynamics modeling \citep{NEURIPS2024_7ce9df1d}, and transfer learning \citep{pmlr-v235-ingebrand24a, ingebrand2025function}, but existing analysis is limited.
Prior work established asymptotic results \citep[Theorem~1]{ingebrand2025function} showing that as $n \to \infty$ the span of these basis functions can represent the entire Hilbert space. 
However, theoretical gaps remain in the existing work: 
\begin{enumerate*}[label=(\roman*)]
    \item 
    there is no formal connection between function encoders and Hilbert space techniques,
    \item 
    there are no principled methods to choose the number of basis functions $n$, and
    \item 
    existing work lacks finite sample guarantees that quantify the quality of the approximation once the basis functions are learned.
\end{enumerate*}
We extend the existing foundation to fill this gap.

\textbf{Kernel methods and kernel approximation: }
Kernel methods such as Gaussian processes, kernel ridge regression, and support vector machines provide strong guarantees through RKHS theory and often support closed-form training \citep{schölkopf2002learning, christmann2008support}. Despite this, two persistent challenges limit their use in practice: scalability and kernel choice.
Approximations such as Nystr{\"o}m sampling \citep{JMLR:v6:drineas05a}, random Fourier features \citep{NIPS2007_013a006f}, and Fastfood \citep{41466} mitigate these costs, but predictions still fix the kernel in advance and cannot adapt to data.
Function encoders differ by learning a data-dependent feature map in the primal that can be viewed as an adaptive, kernel-like predictor.

\textbf{Deep kernel learning: }
Deep kernel learning adapts kernels by parameterizing them with a neural network $f_\theta$, giving $k_\theta(x, x') = k(f_\theta(x), f_\theta(x'))$,
where $k$ is a base kernel such as RBF \citep{wilson2016deep}. This improves expressiveness but retains the computational bottlenecks of kernel methods: inference still requires evaluating against all $m$ training points in the dual. 
Recent work has established generalization guarantees for deep kernels through capacity analysis and deep kernel regression \citep{pmlr-v202-zhang23ax, pmlr-v222-ji24b} that complement our bounds for function encoders.
By contrast, function encoders move fully to the primal: the network directly defines the explicit feature map $\phi(x)$, and prediction cost depends only on $n$. This shift eliminates dataset-size dependence at inference while still enabling kernel-style analysis.

\textbf{Dictionary learning: }
Dictionary learning methods \citep{10.1109/TSP.2006.881199}, SINDy \citep{doi:10.1073/pnas.1517384113}, and Koopman approaches \citep{mezic2005spectral, williams2015data} also seek compact representations by combining basis elements with coefficients. Neural variants \citep{pmlr-v190-lee22a, lusch2018deep, NIPS2017_3a835d32} learn dictionaries or observables jointly with the model and resemble the neural basis learning of function encoders. 
These methods, however, have fundamentally different objectives and problem settings: sparsity for model discovery (SINDy, dictionaries) or linearization of dynamics (Koopman). Their guarantees typically concern sparsity recovery or consistency.
Function encoders instead learn basis functions to span a subspace across tasks, followed by ridge regression per task, and admit RKHS-style prediction bounds.

\section{Function Encoders as Learned Feature Representations}

Function encoders can be viewed as feature maps learned through neural basis functions. 
The function encoder optimization exactly matches the primal feature learning problem in a Hilbert space $\mathcal{H}$. 
For the purpose of illustration, we first consider the scalar case, and show that function encoders naturally extend to the vector-valued case in Appendix \ref{section: vector valued case}.
Let $\mathcal{H}$ be a Hilbert space. 
A function encoder learns a set of basis functions $\lbrace \psi_{j} \rbrace_{j=1}^{n}$, which together define a feature map $\phi : \mathcal{X} \to \mathbb{R}^{n}$,
\begin{equation}
    \phi(x) = [ \psi_{1}(x), \ldots, \psi_{n}(x) ]^{\top},
\end{equation}
so that any function $f$ in the span of these basis functions can be written as $f(x) = \langle c, \phi(x) \rangle$ for some coefficient vector $c \in \mathbb{R}^{n}$.
Given training data $(x_{1}, y_{1}), \ldots, (x_{m}, y_{m})$, the coefficients $c$ of the function approximation $\hat{f}$ are computed by solving a regularized least-squares problem,
\begin{equation} 
    \label{eqn: function encoder least squares}
    \min_{c \in \mathbb{R}^{n}} \frac{1}{m} \sum_{i=1}^{m} (y_{i} - \langle c, \phi(x_{i}) \rangle)^{2} + \lambda \lVert c \rVert^{2},
\end{equation}
which has a closed-form solution,
\begin{equation}
    \label{eqn: function encoder least squares solution}
    \biggl( \frac{1}{m} \sum_{i=1}^{m} \phi(x_{i}) \phi(x_{i})^{\top} + \lambda I \biggr) c = \frac{1}{m} \sum_{i=1}^{m} y_{i} \phi(x_{i}).
\end{equation}

A function encoder learns in two stages. In the offline phase, the function encoder is trained on a collection of datasets $\lbrace D_{1}, \ldots, D_{N} \rbrace$, where each $D_j = \{(x_i, f_j(x_i))\}_{i=1}^m$ comes from a different function $f_{j} \in \mathcal{H}$.
The function encoder minimizes the loss given by,
\begin{equation}
    \label{eqn: function encoder loss}
    \frac{1}{N m} \sum_{j=1}^{N} \sum_{i=1}^{m} \lVert f_{j}(x_{i}) - \hat{f}_{j}(x_{i}) \rVert^{2} + \lambda \lVert \hat{f}_{j} \rVert^{2}.
\end{equation}
After training, the basis functions $\lbrace \psi_{j} \rbrace_{j=1}^{n}$ are fixed. Then at inference time, we can compute and update the coefficients $c$ using a small amount of online data via least squares \eqref{eqn: function encoder least squares}.
See \cite{ingebrand2025function} for more details.

Note that the basis functions are generally not unique, do not need to be orthonormal. 
Since we compute the coefficients via least squares, we only require the basis functions to be linearly independent. To enforce orthonormality, it is possible to use the Gram-Schmidt process during training, but this significantly increases training time. In practice, regularization or soft penalties are preferable to keep the bases well-conditioned without incurring large computational overhead. 

\subsection{Function Encoders Are Learnable Kernels}

Function encoders define a kernel $k: \mathcal{X} \times \mathcal{X} \to \mathbb{R}$ through the inner product,
which is automatically a valid (symmetric, positive semi-definite) kernel.

\begin{prop}
    Let $\phi(x) = [\psi_1(x), \ldots, \psi_n(x)]^\top$.
    The kernel $k$, defined by
    \begin{equation}
        \label{eqn: function encoder kernel}
        k(x, x') = \langle \phi(x), \phi(x') \rangle = \sum_{j=1}^{n} \psi_{j}(x) \psi_{j}(x').
    \end{equation}
    is a valid reproducing kernel.
\end{prop}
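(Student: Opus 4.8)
The plan is to verify the two defining properties of a reproducing kernel—symmetry and positive semi-definiteness—and then invoke the Moore--Aronszajn theorem, which guarantees that any symmetric, positive semi-definite function on $\mathcal{X} \times \mathcal{X}$ is the reproducing kernel of a unique RKHS. Because $k$ is defined directly as an inner product of the explicit feature map $\phi$, both properties should follow almost immediately from the corresponding properties of the Euclidean inner product on $\mathbb{R}^{n}$, so the argument is essentially the standard ``kernel induced by a feature map'' fact.

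Symmetry is the easy half: since $\langle \phi(x), \phi(x') \rangle = \langle \phi(x'), \phi(x) \rangle$ by symmetry of the inner product, we immediately get $k(x, x') = k(x', x)$ for all $x, x' \in \mathcal{X}$. The substantive step is positive semi-definiteness. I would fix an arbitrary finite collection $x_{1}, \ldots, x_{\ell} \in \mathcal{X}$ and scalars $a_{1}, \ldots, a_{\ell} \in \mathbb{R}$, and rewrite the associated quadratic form by pulling the coefficients inside the inner product:
\[
\sum_{i=1}^{\ell} \sum_{j=1}^{\ell} a_{i} a_{j}\, k(x_{i}, x_{j})
= \Bigl\langle \sum_{i=1}^{\ell} a_{i} \phi(x_{i}),\ \sum_{j=1}^{\ell} a_{j} \phi(x_{j}) \Bigr\rangle
= \Bigl\lVert \sum_{i=1}^{\ell} a_{i} \phi(x_{i}) \Bigr\rVert^{2} \geq 0.
\]
This shows that the Gram matrix $[k(x_{i}, x_{j})]_{i,j}$ is positive semi-definite for every choice of points, which is exactly the required condition.

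Having established both properties, I would conclude by Moore--Aronszajn that $k$ is the reproducing kernel of a unique RKHS. For a more self-contained and informative argument, I would alternatively exhibit this RKHS explicitly as $\mathcal{H}_{k} = \mathrm{span}\{\psi_{1}, \ldots, \psi_{n}\}$, equipped with the norm $\lVert f \rVert_{\mathcal{H}_{k}} = \min\{ \lVert c \rVert : f = \langle c, \phi(\cdot) \rangle \}$, which is a genuine norm precisely because the basis functions $\psi_{j}$ are taken to be linearly independent (as noted in the preceding discussion). The reproducing property $\langle f, k(\cdot, x) \rangle_{\mathcal{H}_{k}} = f(x)$ then follows by observing that $k(\cdot, x) = \langle \phi(x), \phi(\cdot) \rangle$ corresponds to the coefficient vector $\phi(x)$, so that $\langle f, k(\cdot, x) \rangle_{\mathcal{H}_{k}} = \langle c_{f}, \phi(x) \rangle = f(x)$.

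I do not anticipate a genuine obstacle, since the proposition is the finite-dimensional, primal-side specialization of a classical result. The only care required is bookkeeping: ensuring the feature map is well defined (each $\psi_{j}$ is a fixed real-valued function on $\mathcal{X}$), and being precise about which definition of ``valid reproducing kernel'' one adopts, so that the positive semi-definiteness computation connects cleanly to the existence and uniqueness of the associated RKHS.
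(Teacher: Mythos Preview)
Your proposal is correct and follows essentially the same approach as the paper: both establish symmetry from the inner product and positive semi-definiteness via $\bigl\lVert \sum_i a_i \phi(x_i) \bigr\rVert^2 \geq 0$. Your additional explicit construction of the RKHS and invocation of Moore--Aronszajn go beyond what the paper records, but the core argument is identical.
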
 

\begin{proof}
    For any $\alpha \in \mathbb{R}^m$ and $\lbrace x_i \rbrace_{i=1}^m$, $\lVert \sum_i \alpha_i \phi(x_i) \rVert^2 \ge 0$, and $k(x,x') = k(x',x)$ by symmetry of the inner product.
    Thus, $k$ is symmetric positive semi-definite, and therefore a valid kernel. 
\end{proof}

In the dual space, learning is expressed in terms of kernel evaluations between data points. 
By the representer theorem \citep{10.1007/3-540-44581-1_27}, the optimal solution to the primal problem \eqref{eqn: function encoder least squares} lies in the span of features over the training data. In particular, with $m$ training points, the solution can be expressed as $w^{*} = \sum_{j=1}^{m} \alpha_{j} \phi(x_{j})$. Substituting this into the primal objective function in \eqref{eqn: function encoder least squares} yields
\begin{equation}
    \label{eqn: dual problem}
    \min_{\alpha \in \mathbb{R}^{m}} \frac{1}{m} \sum_{i=1}^{m} \biggl( y_{i} - \sum_{j=1}^{m} \alpha_{j} k(x_{i}, x_{j}) \biggr)^{2} + \lambda \lVert w^{*} \rVert^{2}.
\end{equation}
The solution can be computed as the solution to the linear system $(K + \lambda m I) \alpha = Y$, where $K_{ij} = k(x_{i}, x_{j})$ and $Y = [y_{1}, \ldots, y_{m}]^{\top}$.
Solving the primal problem \eqref{eqn: function encoder least squares} is more efficient when $n \ll m$, while solving the dual \eqref{eqn: dual problem} is preferable when $m \ll n$.
This shows that function encoders not only provide explicit feature maps in the primal, but also instantiate valid kernels in the dual.

\section{Learning Compact Feature Representations}

A central question for function encoders is how many basis functions are needed for a given dataset. 
Existing theory ensures completeness as $n \to \infty$, 
but there is currently no principled rule for selecting a compact set of basis functions. Without such a rule, models risk either underfitting (too few basis functions) or overfitting, which leads to redundancy and inefficiency (too many). 
We develop two PCA-guided training strategies to address this question and identify compact bases: a sequential, progressive training approach and a parallel train-then-prune approach.

\subsection{Progressive Training of Basis Functions}

Progressive training builds the basis set sequentially, ensuring each new function captures variance not explained by the previous ones. This method explicitly leverages PCA in the space of coefficients and offers interpretability, but at the cost of sequential training. 

Progressive training learns the basis functions one at a time, starting with a single basis function. After training, the basis function is frozen. We then create a new basis function, add it to the basis set, and repeat training on the new basis function. 
In particular, at step $b=1$, we train a single basis $\psi_1$ using the function encoder objective in \eqref{eqn: function encoder loss}. After training, we freeze $\psi_1$.
At step $b=2$, we add a second basis $\psi_2$ to form $\phi_2(x) = [\psi_1(x), \psi_2(x)]^\top$. We compute coefficients $c^j$ for each dataset $D_j$ via \eqref{eqn: function encoder least squares solution}. During optimization, we update only the parameters of $\psi_2$ so that it fits the residual $f_j(x) - \langle c^j, \psi_1(x)\rangle$.
At each step $b$, the new basis $\psi_b$ is trained while $\lbrace \psi_1,\dots,\psi_{b-1} \rbrace$ remain fixed. 
While freezing the previous basis functions is not strictly necessary, it means each new basis captures variance not explained by the frozen bases and prevents collapse or redistribution of variance among earlier bases. This creates a natural ordering of basis functions analogous to PCA.

We then compute the coefficients $\lbrace c^1,\dots,c^N \rbrace$ across all training datasets $\lbrace D_{1}, \ldots, D_{N} \rbrace$ and form the mean-centered covariance matrix, 
\begin{equation}
    \label{eqn: covariance matrix}
    \Sigma_{b} = \frac{1}{N - 1} \sum_{i=1}^{N} (c^{i} - \bar{c})(c^{i} - \bar{c})^{\top}.
\end{equation}
Let $\lambda_{1} \geq \ldots \geq \lambda_{b}$ be the eigenvalues. 
The cumulative explained variance, $\text{CEV}_r = \sum_{i=1}^r \text{EVR}_i$, where $\text{EVR}_k = \lambda_k / \sum_{i=1}^b \lambda_i$ is the explained variance ratio, gives a principled proxy for the effective dimension of the space.
Training stops once $\text{CEV}_r \ge \tau$ for a user-specified threshold $\tau$ (e.g. 99\%).
This rule selects the effective rank of the coefficient covariance, which serves as a proxy for the intrinsic dimension of the data. PCA is used in exactly this same fashion, where the eigenvalue spectrum often shows a sharp elbow once the bases span the intrinsic dimension. 

The main limitation is that training is inherently sequential in $b$, which prevents parallelization on GPUs. The primary benefit is an ordered, interpretable basis with a clear stopping rule that signals when added capacity no longer improves representation quality. As in PCA, the stopping threshold remains primarily heuristic, however.
We summarize the training loop as Algorithm \ref{algo: progressive training}. 

\begin{algorithm}[!ht]
\caption{Progressive Training}
\label{algo: progressive training}
\begin{algorithmic}[1]
\Require Datasets $\{D_j\}_{j=1}^N$, variance threshold $\tau$
\State Initialize $\mathcal B \gets \varnothing$, $\text{CEV}\gets 0$, $b\gets 0$
\While{$\text{CEV} < \tau$}
  \State $b \gets b+1$; add new basis $\psi_b$ to $\mathcal{B}$; freeze $\{\psi_1,\dots,\psi_{b-1}\}$
  \State Train $\psi_b$
  \For{each dataset $D_j$}
    \State Compute coefficients $c^j$ via \eqref{eqn: function encoder least squares solution}
  \EndFor
  \State Collect $\{c^j\}_{j=1}^N$, compute $\Sigma_b$
  \State Update $\text{CEV}$ from eigenvalues of $\Sigma_{b}$
\EndWhile
\State \Return $\mathcal B = \{\psi_1,\dots,\psi_b\}$
\end{algorithmic}
\end{algorithm}

\subsection{Train-Then-Prune}

Train-then-prune takes advantage of parallel computation. 
This approach is more computationally efficient, but requires careful pruning and retraining. 
Instead of building bases sequentially, we overparameterize with $B$ bases $\lbrace \psi_1,\dots,\psi_B \rbrace$ and train them jointly using \eqref{eqn: function encoder loss}. 
We then compute the coefficients for each function in the datasets and compute the covariance matrix $\Sigma_{B}$ as in \eqref{eqn: covariance matrix}. 
From the eigenvalues of $\Sigma_{B}$, we compute the effective rank,
\begin{equation}
    \label{eq:effective_rank}
    r = \min \biggl\lbrace n : \frac{\sum_{i=1}^{n} \lambda_i}{\sum_{j=1}^{B} \lambda_j} \geq \tau \biggr\rbrace,
\end{equation}
which is the minimum number of basis functions required to capture at least $\tau$ of the variance. 

Because the bases are trained jointly, they are not naturally ordered.
To prune the basis functions, we need to select the $r$ most informative basis functions. 
We score each basis $\psi_p$ by
\begin{equation}
    s_p = \sum_{i=1}^r \lambda_i \, U_{pi}^{2},
\end{equation}
where $U$ contains eigenvectors of $\Sigma_B$.
Alternative scoring rules, such as cosine similarity between bases and eigenvectors, ignore eigenvalue magnitudes and yield higher reconstruction error.

We then keep the top-$r$ basis functions and prune the rest. 
In a multi-headed MLP, this corresponds to removing parameters from the final layer to form a reduced-size network. 
Unlike the progressive training algorithm, the basis functions selected by the train-then-prune algorithm are not guaranteed to capture the desired variance. 
We then perform a short fine-tuning step to retrain the basis functions to capture the residual variance. 
We summarize the procedure in Algorithm \ref{algo: train then prune}.

\begin{algorithm}[!ht]
\caption{Train-Then-Prune}
\label{algo: train then prune}
\begin{algorithmic}[1]
\Require Datasets $\lbrace D_j \rbrace_{j=1}^N$, initial $B \gg r$, variance threshold $\tau$
\State Initialize function encoder with $\lbrace \psi_1,\dots,\psi_B \rbrace$; train jointly on all tasks
\For{each dataset $D_j$}
\State Compute coefficients $c^j$ via \eqref{eqn: function encoder least squares solution} with $\phi_B$
\EndFor
\State Form covariance $\Sigma_B$, eigendecompose to $(U, {\lambda_i})$
\State Compute $r = \min \lbrace n : \sum_{i=1}^n \lambda_i / \sum_{j=1}^B \lambda_j \ge \tau \rbrace$
\State Score each basis $s_p = \sum_{i=1}^r \lambda_i U_{pi}^2$
\State Keep top-$r$ bases, prune others
\State Fine-tune the reduced model
\State \Return Top-$r$ bases
\end{algorithmic}
\end{algorithm}

\section{Determining Complexity and Generalization Bounds}

Once the basis functions are fixed, a function encoder reduces to ridge regression in a finite-dimensional feature space. The central question then becomes: how does generalization depend on the number of bases $n$, the sample size $m$, and regularization $\lambda$? We address this by analyzing the complexity of the induced hypothesis class using two complementary analyses: Rademacher complexity and PAC-Bayes.

\subsection{Rademacher Complexity Bounds}

The Rademacher complexity measures how ``rich'' a function class is \citep{10.5555/944919.944944}. A high Rademacher complexity indicates that a function class is able to closely model more complex functions. Intuitively, the Rademacher complexity helps quantify the balance between model expressiveness and the ability to generalize to unseen data. 

Let $\hat{c}_\lambda$ denote the solution to \eqref{eqn: function encoder least squares} with regularization parameter $\lambda > 0$,
\begin{equation} 
    \label{eqn: least squares solution}
    \hat{c}_\lambda \coloneqq \arg\min_{c \in \mathbb{R}^n}\ \frac{1}{m} \sum_{i=1}^m ( \langle \phi(x_i), c\rangle - y_i )^2 + \lambda \lVert c \rVert_2^2,
\end{equation}
with corresponding loss function of interest $\ell(f_c(x), y) = (f_c(x) - y)^2 = (\langle c, \phi(x)\rangle - y)^2$. Define the population risk $L(f_c) \coloneqq \mathbb{E}[\ell(f_c(x), y)]$. Then, given an empirical sample of i.i.d.\ data $\mathcal{S} \coloneqq \{(x_i, y_i)\}_{i=1}^m \subset \mathcal{X} \times \mathcal{Y}$, define the empirical risk $\hat{L}_m(f_c) = \frac{1}{m} \sum_{i=1}^m \ell(f_c(x_i), y_i)$. 

Under some mild assumptions on the boundedness of the basis functions $\lbrace \psi_1, \ldots, \psi_n \rbrace$ and outputs $y \in \mathcal{Y}$, we have that the learning scheme of \eqref{eqn: function encoder least squares} satisfies the following result:
\begin{restatable}{thm}{thma} \label{thm: Rad generalization bound}
    Let $\psi_1, \ldots, \psi_n \subset \mathcal{H}$ be a \emph{fixed} set of basis functions, where each $\psi_j: \mathcal{X} \to \mathcal{Y}$ is a fixed, bounded neural network satisfying $\sup_{x \in \mathcal{X}} |\psi_j(x)| \leq R$. Assume that the output space for the regularized least-squares problem of \eqref{eqn: function encoder least squares} is uniformly bounded as $\sup_{y \in \mathcal{Y}} \lVert y \rVert_2 \leq Y$. Given regularization parameter $\lambda > 0$, then for any $\delta > 0$ we have that with probability greater than or equal to $1-\delta$ the least-squares solution $f_{\hat{c}_\lambda}$ from \eqref{eqn: least squares solution} satisfies 
    \begin{align} \label{eqn: Rad generalization bound}
        L(f_{\hat{c}_\lambda}) &\leq \hat{L}_m(f_{\hat{c}_\lambda}) +  2 Y^2 R \sqrt{\frac{n}{m \lambda}} \biggl ( R \sqrt{\frac{n}{\lambda}} + 1 \biggr) \biggl( 2 + \sqrt{\frac{\log (1/\delta)}{2}}  \biggr) \\
        &\lesssim  \hat{L}_m(f_{\hat{c}_\lambda}) + \tilde{\mathcal{O}} \biggl( Y^2 R^2 \frac{n}{\lambda \sqrt{m}} \biggr).
    \end{align}
\end{restatable}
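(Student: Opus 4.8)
The plan is to reduce the function encoder, at a \emph{fixed} basis, to ridge regression over a norm-bounded linear class, and then invoke the standard Rademacher uniform-convergence machinery. The first step is an a priori bound on the learned coefficients. Since $\hat{c}_\lambda$ minimizes the objective in \eqref{eqn: least squares solution}, its value is no larger than the value at $c = 0$, which equals $\frac{1}{m}\sum_i y_i^2 \le Y^2$. Discarding the nonnegative data-fit term leaves $\lambda \lVert \hat{c}_\lambda \rVert_2^2 \le Y^2$, hence $\lVert \hat{c}_\lambda \rVert_2 \le Y/\sqrt{\lambda}$. This lets me work inside the fixed hypothesis class $\mathcal{F} = \{ x \mapsto \langle c, \phi(x) \rangle : \lVert c \rVert_2 \le Y/\sqrt{\lambda} \}$, which contains $f_{\hat{c}_\lambda}$, so a uniform bound over $\mathcal{F}$ transfers to the learned predictor.

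Next I would control the geometry induced by the bounded basis. From $\sup_x |\psi_j(x)| \le R$ we get $\lVert \phi(x) \rVert_2^2 = \sum_{j=1}^n \psi_j(x)^2 \le nR^2$, so $\lVert \phi(x) \rVert_2 \le R\sqrt{n}$ and $|f_c(x)| = |\langle c, \phi(x)\rangle| \le YR\sqrt{n/\lambda}$ for every $f_c \in \mathcal{F}$. Together with $|y| \le Y$ this bounds the argument of the squared loss by $M \coloneqq Y(R\sqrt{n/\lambda} + 1)$. The empirical Rademacher complexity of the linear class follows from the classical computation for norm-bounded predictors: $\hat{\mathcal{R}}_{\mathcal{S}}(\mathcal{F}) = \frac{1}{m}\mathbb{E}_\sigma \sup_{\lVert c \rVert_2 \le Y/\sqrt{\lambda}} \langle c, \sum_i \sigma_i \phi(x_i)\rangle \le \frac{Y/\sqrt{\lambda}}{m} \mathbb{E}_\sigma \lVert \sum_i \sigma_i \phi(x_i) \rVert_2$, and Jensen together with $\mathbb{E}_\sigma \lVert \sum_i \sigma_i \phi(x_i) \rVert_2^2 = \sum_i \lVert \phi(x_i)\rVert_2^2 \le mnR^2$ yields $\hat{\mathcal{R}}_{\mathcal{S}}(\mathcal{F}) \le YR\sqrt{n/(m\lambda)}$.

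The remaining ingredient is the passage from the linear class to the loss class. On the interval where its argument lives, $t \mapsto (t-y)^2$ is $2M$-Lipschitz uniformly in the fixed offset $y$, so Talagrand's contraction lemma gives $\hat{\mathcal{R}}_{\mathcal{S}}(\ell \circ \mathcal{F}) \le 2M\,\hat{\mathcal{R}}_{\mathcal{S}}(\mathcal{F}) \le 2Y^2 R\sqrt{n/(m\lambda)}\,(R\sqrt{n/\lambda}+1)$, which is exactly the prefactor $\kappa$ appearing in \eqref{eqn: Rad generalization bound}; here I note that the offset $-y$ does not depend on $c$, so replacing $f_c$ by $f_c - y$ leaves the Rademacher average unchanged and the contraction applies cleanly. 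Plugging this into the standard bound $L(f) \le \hat{L}_m(f) + 2\mathcal{R}_m(\ell \circ \mathcal{F}) + B\sqrt{\log(1/\delta)/(2m)}$ (symmetrization plus McDiarmid, with loss range $B = M^2$) produces the doubled Rademacher term, i.e.\ the ``$2$'' inside the parenthesis, plus a deviation $M^2 \sqrt{\log(1/\delta)/(2m)}$. In the regime $R\sqrt{n/\lambda} \ge 1$ one has $M^2 = Y^2(R\sqrt{n/\lambda}+1)^2 \le 2Y^2 (R\sqrt{n/\lambda})(R\sqrt{n/\lambda}+1)$, which rewrites this deviation \emph{exactly} as $\kappa \sqrt{\log(1/\delta)/2}$, giving the stated $(2 + \sqrt{\log(1/\delta)/2})$ factor; the $\tilde{\mathcal{O}}$ statement then follows by keeping the dominant $n/\lambda$ scaling.

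I expect the main obstacle to be the nonglobal Lipschitzness and unboundedness of the squared loss: the whole argument hinges on first pinning down $\lVert \hat{c}_\lambda \rVert_2 \le Y/\sqrt{\lambda}$, since without it neither the Lipschitz constant $2M$ nor the loss range $M^2$ can be controlled, and it is precisely the ridge penalty that supplies this bound. A secondary technical point is ensuring the contraction lemma is applied to the centered family $f_c - y$ so that the data-dependent offset does not obstruct the standard $L$-Lipschitz contraction, and verifying that the mild $R\sqrt{n/\lambda} \ge 1$ condition used to consolidate the constants holds in the relevant regime.
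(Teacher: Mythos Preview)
Your proposal is correct and follows essentially the same route as the paper's proof: both bound $\lVert \hat{c}_\lambda \rVert_2 \le Y/\sqrt{\lambda}$ via comparison with $c=0$, use $\lVert \phi(x) \rVert_2 \le R\sqrt{n}$ to get the Lipschitz constant $2Y(R\sqrt{n/\lambda}+1)$ for the squared loss, and then combine the linear-class Rademacher bound $YR\sqrt{n/(m\lambda)}$ with a standard uniform-convergence inequality. The only cosmetic difference is that the paper packages the last two steps by citing Theorem~3 and Corollary~5 of \citet{kakade2008reglineargenbounds}, which yields the stated deviation term directly, whereas you unpack symmetrization, contraction, and McDiarmid by hand and use the side condition $R\sqrt{n/\lambda}\ge 1$ to match the constants.
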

The proof follows from the Rademacher complexity of regularized linear predictors \citep{kakade2008reglineargenbounds}, and is presented in Appendix \ref{appendix: rademacher}.
The key takeaway is the scaling: complexity grows with the number of bases $n$, but decreases with data size $m$ and regularization $\lambda$. Thus, more bases increase expressivity but also the risk of overfitting unless compensated for by sufficient data or stronger regularization. 

\subsection{PAC Bayes}

PAC-Bayes analysis provides probabilistic guarantees that hold for randomized predictors, but applying it to function encoders is not straightforward. 
Existing results typically assume fixed features, scalar outputs, and bounded loss functions, whereas function encoders involve learned, multivariate bases and regularized least-squares coefficients. 

To address this, we overcome two primary challenges: we work in the fixed basis setting and use truncated Gaussian distributions for the prior and posterior to handle the unbounded loss function in the regularized least squares problem. 
This extension is non-trivial, and handles multivariate outputs, accommodates learned feature maps, and yields non-vacuous guarantees. 
More broadly, the techniques for controlling the KL divergence between truncated Gaussians extend beyond our setting, offering a general-purpose tool for PAC-Bayes analysis for unbounded loss function settings.

For the fixed basis setting, we obtain:

\begin{restatable}{thm}{thmb} \label{thm: pac bayes analysis}
Let $\psi_1, \ldots, \psi_n \subset \mathcal{H}$ be a \emph{fixed} set of basis functions, where each $\psi_j: \mathcal{X} \to \mathcal{Y}$ is a fixed, bounded neural network satisfying $\sup_{x \in \mathcal{X}} |\psi_j(x)| \leq R$. Assume furthermore that $\mathcal{Y}$ is uniformly bounded as $\sup_{y \in \mathcal{Y}} \lVert y \rVert_2 \leq Y$ and that the mapping $\Phi(c) = \sum_{i=1}^n c_i \psi_i$ is injective. 
Given regularization parameter $\lambda > 0$, then for any $\delta > 0$ we have that with probability greater than or equal to $1-\delta$ the least-squares solution $f_{\hat{c}_\lambda}$ from \eqref{eqn: least squares solution} satisfies 
\begin{equation}
    L(f_{\hat{c}_\lambda}) \lesssim \hat{L}_m(f_{\hat{c}_\lambda}) + \tilde{\mathcal{O}}\biggl(Y^2 R^2  \frac{n^{3/2}}{\lambda \sqrt{m}} \biggr).
\end{equation}
\end{restatable}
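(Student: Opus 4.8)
The plan is to use that, with the basis $\{\psi_j\}_{j=1}^n$ held fixed, the learner is ordinary ridge regression over the coefficient vector $c\in\mathbb{R}^n$, so a PAC-Bayes argument can be run directly in this finite-dimensional coefficient space. First I would bound the size of the learned solution: the normal equations \eqref{eqn: function encoder least squares solution} show the regularized Gram matrix has smallest eigenvalue at least $\lambda$, while $\lVert \tfrac1m\sum_i y_i\phi(x_i)\rVert \le YR\sqrt{n}$ because $\lvert y_i\rvert\le Y$ and $\lVert\phi(x_i)\rVert\le R\sqrt{n}$. Hence $\lVert\hat c_\lambda\rVert\le YR\sqrt{n}/\lambda =: \kappa$, and this single inequality is what propagates the $n,\lambda,Y,R$ dependence into the final rate.

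The difficulty specific to this setting is that the squared loss $(\langle c,\phi(x)\rangle-y)^2$ is unbounded in $c$, so a plain Gaussian prior makes the relevant exponential-moment terms diverge. I would therefore take the prior $P$ and posterior $Q$ to be Gaussians with a common covariance $\sigma^2 I$ -- $P$ centered at the origin and $Q$ centered at $\hat c_\lambda$ -- each \emph{truncated} to a ball $\mathcal{C}=\{c:\lVert c\rVert\le C\}$ with $C\asymp\kappa$. On $\mathcal{C}$ the loss is uniformly bounded, since $\lvert\langle c,\phi(x)\rangle\rvert\le C\lVert\phi(x)\rVert\le CR\sqrt{n}$ forces $\ell\le(CR\sqrt{n}+Y)^2$, so an unbounded-loss PAC-Bayes inequality becomes applicable with finite moments. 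The injectivity of $\Phi(c)=\sum_i c_i\psi_i$ guarantees that distinct coefficient vectors yield distinct predictors, so that the coefficient-space posterior is a faithful randomization of the function-space hypothesis and the de-randomization step below is well posed.

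The main obstacle is controlling $\mathrm{KL}(Q\Vert P)$ between two \emph{truncated} Gaussians, where the normalizers $Z_P,Z_Q$ enter through a ratio $\log(Z_P/Z_Q)$ with no closed form. My strategy is to choose $C$ as a constant multiple of $\kappa$ plus a few standard deviations $\sigma\sqrt{n}$, so that both $\mathcal{N}(0,\sigma^2 I)$ and $\mathcal{N}(\hat c_\lambda,\sigma^2 I)$ place all but a negligible fraction of their mass inside $\mathcal{C}$; Laurent--Massart $\chi^2$ tail bounds make this quantitative. Then $\log(Z_P/Z_Q)$ is pinned to a constant and the divergence collapses to the familiar Gaussian form plus a controlled correction, yielding $\mathrm{KL}(Q\Vert P)\le \lVert\hat c_\lambda\rVert^2/(2\sigma^2)+\mathcal{O}(1)\le \kappa^2/(2\sigma^2)+\mathcal{O}(1)$. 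Checking that one choice of $C$ simultaneously preserves mass, bounds the loss, and keeps the partition-function ratio near one across the full range of $\sigma$ that I will eventually optimize over is the most delicate part; the general-purpose truncated-Gaussian KL estimates advertised in the surrounding text are exactly what is needed here.

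Finally I would invoke a PAC-Bayes bound for the randomized predictor, controlling $\mathbb{E}_{c\sim Q}[L(f_c)]$ by $\mathbb{E}_{c\sim Q}[\hat L_m(f_c)]$ plus a complexity term of order $\sqrt{(\mathrm{KL}(Q\Vert P)+\log(1/\delta))/m}$; using a variance-sensitive (Bernstein / sub-Gamma) form rather than a crude range-based one is what keeps this term at the $1/\sqrt{m}$ rate despite the large worst-case loss on $\mathcal{C}$. It then remains to de-randomize. Writing $c=\hat c_\lambda+\Delta$ and expanding the squared loss, the first-order term vanishes by the symmetry of $Q$ about $\hat c_\lambda$ (up to a truncation correction), and the second-order term equals $\mathbb{E}_{c\sim Q}\langle\Delta,\phi\rangle^2=\sigma^2\lVert\phi\rVert^2\le\sigma^2 nR^2$; the identical expansion applies to the empirical risk. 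Hence $L(f_{\hat c_\lambda})$ and $\hat L_m(f_{\hat c_\lambda})$ differ from their $Q$-averaged versions by $\mathcal{O}(\sigma^2 nR^2)$. Optimizing over the free variance $\sigma^2$ -- trading the KL/complexity term, which shrinks as $\sigma$ grows, against the de-randomization penalty $\sigma^2 nR^2$, which grows with $\sigma$ -- produces the advertised $\tilde{\mathcal{O}}(Y^2R^2\,n^{3/2}/(\lambda\sqrt{m}))$ bound; the extra $\sqrt{n}$ relative to Theorem \ref{thm: Rad generalization bound} is precisely the Gaussian-dimension factor incurred by randomizing over $n$ coefficients.
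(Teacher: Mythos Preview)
Your overall architecture matches the paper's: run PAC--Bayes in the coefficient space with truncated Gaussian prior and posterior, use the truncation to make the squared loss bounded, and then de-randomize back to $\hat c_\lambda$. Two concrete choices in your write-up, however, would prevent you from landing on the stated $\tilde{\mathcal O}\bigl(Y^2R^2 n^{3/2}/(\lambda\sqrt m)\bigr)$ rate.

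\textbf{The norm bound on $\hat c_\lambda$ is too loose.} From the normal equations you get $\lVert\hat c_\lambda\rVert\le YR\sqrt n/\lambda$, but the paper uses the tighter variational bound
\[
\lambda\lVert\hat c_\lambda\rVert^2 \;\le\; \hat L_m(f_{\hat c_\lambda})+\lambda\lVert\hat c_\lambda\rVert^2 \;\le\; \hat L_m(0)\;\le\; Y^2
\quad\Longrightarrow\quad
\lVert\hat c_\lambda\rVert\le \frac{Y}{\sqrt\lambda}.
\]
This is what sets the radius of the truncation ball and hence the loss range $A\sim (\lVert c\rVert\cdot R\sqrt n)^2$. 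With your $\kappa=YR\sqrt n/\lambda$ the range becomes $A\sim Y^2R^4n^2/\lambda^2$ instead of the paper's $A_\sigma=Y^2R^2n/\lambda$, inflating every downstream constant by an extra $R^2 n/\lambda$.

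\textbf{The $\sigma$-trade-off, as you state it, yields an $m^{-1/3}$ rate.} You bound the de-randomization cost by $\mathcal O(\sigma^2 nR^2)$ and balance it against a complexity term of order $A\,\kappa/(\sigma\sqrt m)$. Minimizing $\kappa^3R^2n/(\sigma\sqrt m)+\sigma^2nR^2$ over $\sigma$ gives $\sigma\asymp\kappa\,m^{-1/6}$ and a final bound $\asymp \kappa^2 nR^2/m^{1/3}$, not $m^{-1/2}$; this holds even with the sharper $\kappa=Y/\sqrt\lambda$. The paper sidesteps this by \emph{not} optimizing: it fixes $\sigma=Y/\sqrt{n\lambda}$ (and $\sigma_0^2=4\sigma^2$), which makes the KL exactly $\mathcal O(n)$ and the range exactly $A_\sigma=Y^2R^2n/\lambda$, so the complexity term alone already gives the $n^{3/2}/(\lambda\sqrt m)$ scaling. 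The de-randomization residual is then $V_{\mathrm{emp}}-V_{\mathrm{pop}}$ with $V(x)=v\lVert\phi(x)\rVert^2$ for some $v\le\sigma^2$; because the \emph{same} additive term appears on both sides it is the \emph{difference} that matters, and a separate Hoeffding step bounds it by $\mathcal O\bigl(\sigma^2 nR^2/\sqrt m\bigr)=\mathcal O\bigl(Y^2R^2/(\lambda\sqrt m)\bigr)$, which is dominated by the complexity term. Your proposal bounds each side separately by $\mathcal O(\sigma^2 nR^2)$ and misses this cancellation.

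A smaller technical difference: the paper truncates the posterior to the ball $B(\hat c_\lambda,\sigma\sqrt n)$ and the prior to the larger containing ball $B\bigl(0,\,Y/\sqrt\lambda+\sigma\sqrt n\bigr)$, then computes the truncated-Gaussian KL \emph{exactly} via incomplete gamma functions (their Lemmas~\ref{lemma: trunc gauss Z and exp}--\ref{lem: incomplete gamma bound}) to obtain $D_{\mathrm{KL}}\le 2n$. Your single common ball with Laurent--Massart tail control can be made to work, but the nested-ball construction with $\sigma_0\neq\sigma$ is what keeps the log-partition ratio and the mean-shift term simultaneously $\mathcal O(n)$ without any asymptotic slack.
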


The proof is presented in Appendix \ref{appendix: pac bayes}.
This result highlights the stability of the predictor under posterior perturbations, aligning with Bayesian interpretations of kernel ridge regression.

\section{Experimental Results}

We evaluate our approach on an illustrative polynomial benchmark with a known, finite intrinsic dimension, and on two dynamical systems examples to showcase our approach: a Van der Pol oscillator and a planar two-body orbital model. The polynomial benchmark serves as a controlled setting where the intrinsic dimension is known, while the dynamical system examples extend our approach to practical nonlinear dynamics modeling problems of practical relevance in robotics and orbital mechanics.

\subsection{An Illustrative Example on Polynomial Spaces}

\begin{figure}
    \centering
    \includegraphics{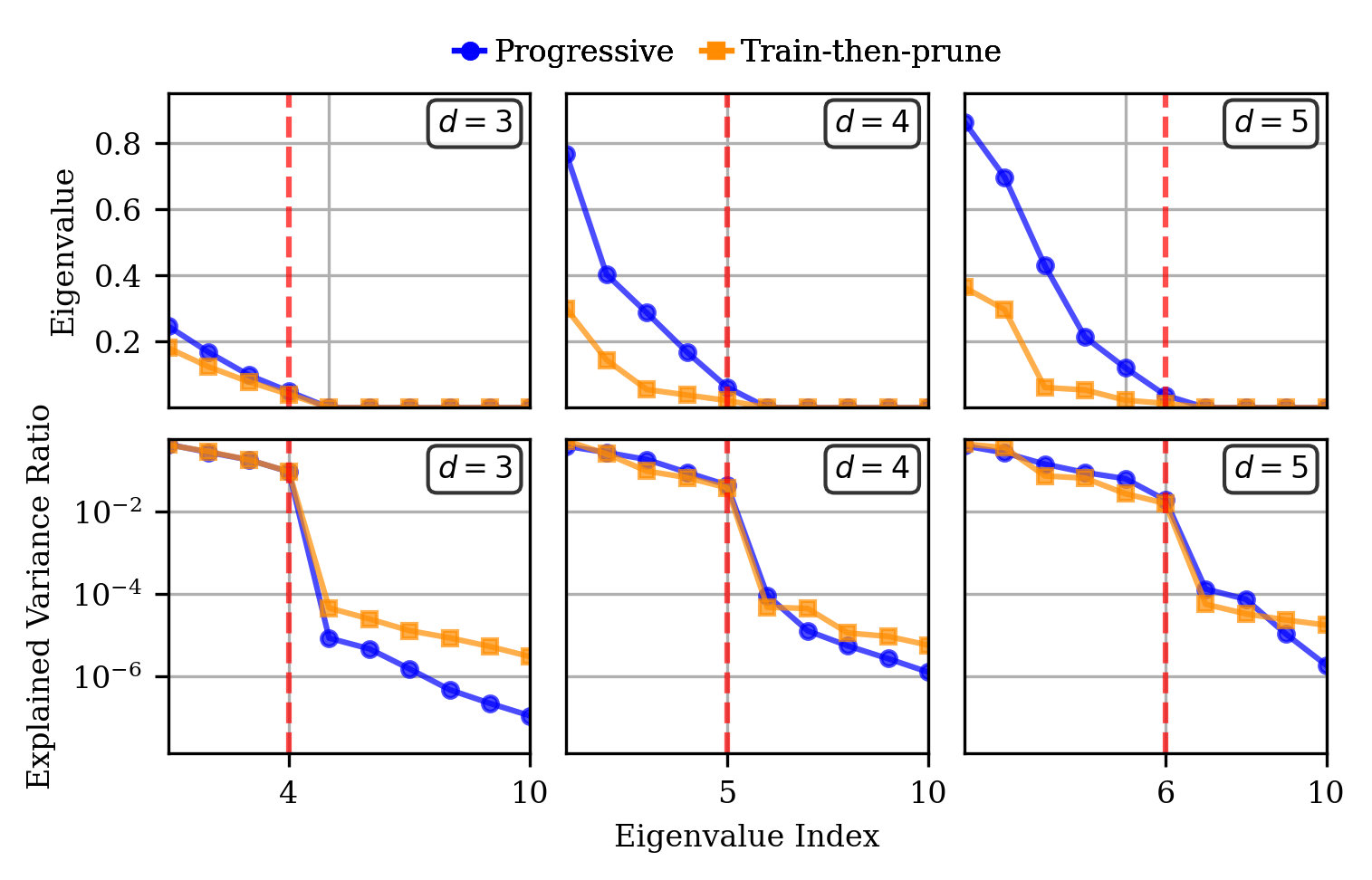}
    \caption{Function encoders recover the intrinsic polynomial dimension using our proposed algorithms. 
    Scree plots of the coefficient covariance (top row) show rapid eigenvalue decay with a clear elbow at the same cutoff, confirming that both the loss curves and variance analysis identify the correct number of bases.
    The explained variance ratio of the eigenvalues (bottom row) shows a sharp drop when the intrinsic dimension ($d+1$) is reached. 
    }
    \label{fig: polynomial scree plots}
\end{figure}

We first consider an illustrative benchmark to validate our two proposed algorithms on polynomial spaces of varying degrees $d \in \lbrace 3, 4, 5 \rbrace$, where the intrinsic dimension is $d + 1$.
Scree plots of the coefficient covariance matrix in Fig. \ref{fig: polynomial scree plots} show rapid eigenvalue decay with clear elbows at the expected dimensionality. 
Both approaches recover the correct number of basis functions: four basis functions for degree-3 polynomials (Fig. \ref{fig:poly-basis-prog} and \ref{fig:poly-basis-prune} in Appendix \ref{appendix:pol_reg}), five for degree-4, and six for degree-5. 
The train-then-prune approach selects the same cutoff points, and fine-tuned pruned models achieve reconstruction accuracy that is identical to the original overparameterized networks. 
The progressive algorithm training curves initially show sharp reductions in mean squared error, followed by a plateau once the intrinsic dimension is reached (Fig. \ref{fig:poly-mse} in Appendix \ref{appendix:pol_reg}). 
We evaluated both a multi-headed MLP architecture that is more computationally efficient since it shares hidden parameters across the basis functions, as well as a basis specified by independent MLPs. 

\subsubsection{Comparison and Connections with Deep Kernels}
\label{sec:deep-kernel}

\begin{figure}
    \centering
    \includegraphics{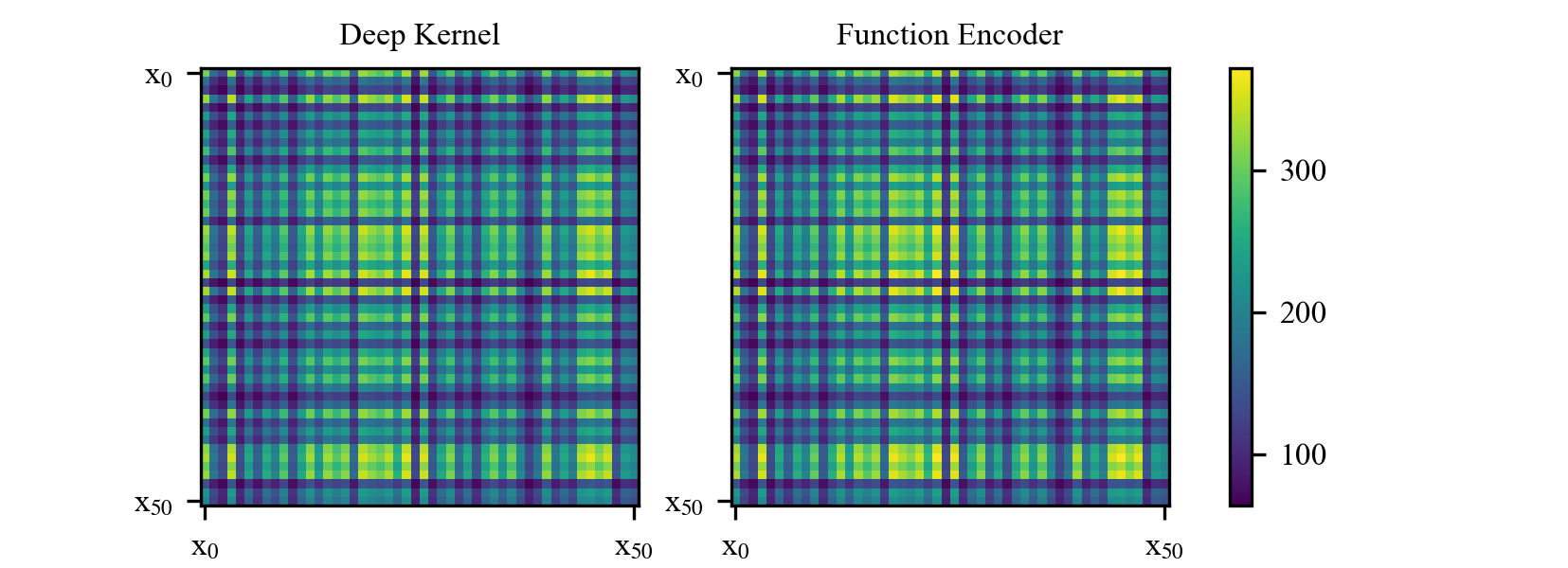}
    \caption{
    Gram matrices from a deep kernel (left) and a function encoder (right) on degree-3 polynomials. Both yield nearly identical geometry, but function encoders obtain it with a fixed-size primal formulation, while deep kernels require cubic-cost Gram matrix inversions in $m$.
    }
    \label{fig:gram-comparison}
\end{figure}

For comparison, we train an RBF deep kernel \citep{wilson2016deep} on the same space of degree-3 polynomials. 
Deep kernels are designed to adapt a kernel for a single supervised task, whereas our setting requires learning a function space across many training functions. We adapt the deep kernel training to our setting for a direct comparison.
Details are provided in Appendix~\ref{appendix:dkl}. 
With $m=20$ evaluation points, the deep kernel takes about ten times longer than a comparable function encoder to reach the same MSE $\approx 10^{-6}$. The slowdown arises from the $m \times m$ Gram matrix inversion in the dual formulation, which introduces a per-function-per-batch cost of $\mathcal{O}(m^3)$ that accumulates across training functions. In contrast, the function encoder solves a fixed-size least-squares problem in the primal, so its training time only grows with $n$ and is independent of $m$. At inference, runtimes are similar as both models use comparable network architectures. For $m=20$ and $n=4$, the one-time coefficient estimation cost is negligible. The main difference comes from the prediction step, where deep kernels rely on kernel evaluations and function encoders on basis evaluations. Thus, despite the comparable inference time (basis vs. kernel evaluations), the function encoder avoids the $\mathcal{O}(m^3)$ training overhead, resulting in the $\approx 10\times$ speedup observed for $m=20$ and $n=4$. 

We compare function encoders with a linear deep kernel to examine the induced geometry. Training with the linear deep kernel remains slower than with function encoders, though faster than with an RBF kernel (roughly twice as fast in our setup), highlighting that the nonlinear kernel in deep kernel training directly impacts scalability. 
Increasing the example set to $m=50$ does not change training time. 
Figure~\ref{fig:gram-comparison} shows that the Gram matrices from both approaches have nearly identical structure. This is consistent with theory: a function encoder can be viewed as a deep linear kernel trained in the primal, recovering the same geometry up to rescalings of the basis. The key difference is efficiency since function encoders achieve this geometry more effectively when $m \gg n$. 
 
\subsection{Modeling Dynamical Systems with Neural ODE Basis Functions}

We next evaluate our approach on two dynamical systems examples, where compact feature representations are critical for real-time robotics, control, and autonomous systems. 
We focus on the Van der Pol oscillator to compare with prior work in \citet{ingebrand2025function, NEURIPS2024_7ce9df1d} and the planar two-body system to demonstrate our approach on a challenging, real-world satellite orbit prediction problem. We implement basis functions as neural ODEs \citep{NEURIPS2024_7ce9df1d}, which are useful for capturing long-term dynamical behaviors. 

For both tasks, we generate trajectories by sampling initial conditions from a bounded region of the state space. We then use an RK4 integration scheme to compute the trajectories to form datasets $D_{i}$ with data of the form $(x_{t}, \Delta t, x_{t+1} - x_{t})$ \citep[c.f.][]{NEURIPS2024_7ce9df1d}. 

\subsubsection{Van der Pol}

The Van der Pol oscillator is a nonlinear system with nontrivial limit-cycle behavior. 
Prior work in \cite{pmlr-v235-ingebrand24a} used 100 neural ODE basis functions to model the space of dynamics. However, our results show that only 2 basis functions are sufficient to capture the space. Both the progressive training algorithm (Algorithm \ref{algo: progressive training}) and train-then-prune (Algorithm \ref{algo: train then prune}) identify the same cutoff point. We see a sharp decline in the explained variance ratios after two basis functions, and the training loss plateaus once two basis functions are trained. 
Despite reducing the number of bases by an order of magnitude, the compact representation achieves the same predictive accuracy as the original overparameterized encoder. This indicates that most of the additional bases in prior work are redundant. While they do not degrade accuracy, they add unnecessary computational overhead.

\subsubsection{Two-Body Problem}

\begin{figure}
    \centering
    \includegraphics[keepaspectratio,width=0.95\textwidth]{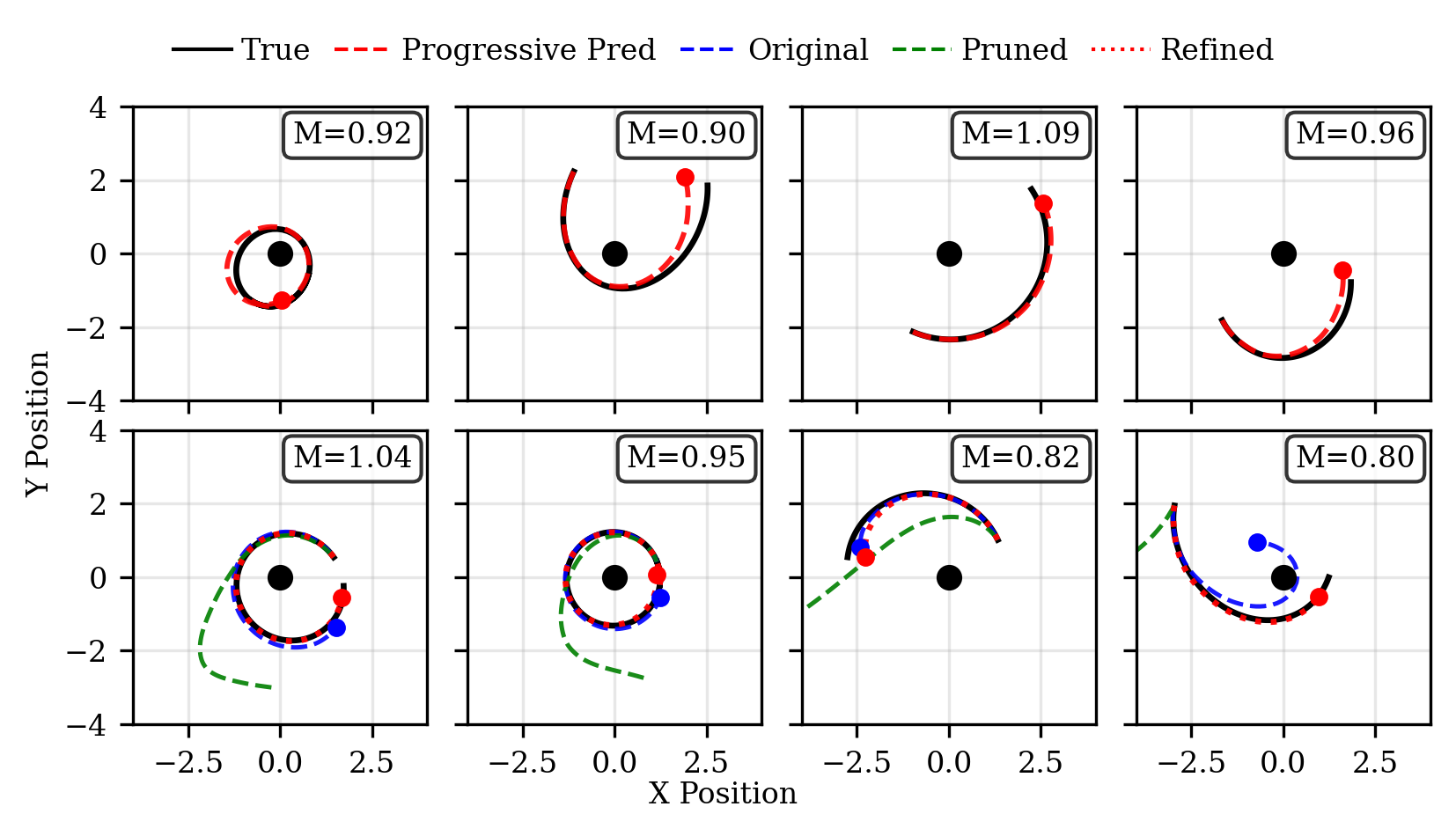}
    \caption{The first row shows the progressive training, where the predicted trajectories follow ground truth orbits using only five to six bases. The second row shows the comparison of the overparameterized model (blue), the pruned model (green), and the refined version (red) against the ground truth (black). 
    The pruned and refined model meets or surpasses the original model after retraining.
    }
    \label{fig:kepler}
\end{figure}

We consider the normalized planar two-body problem with a gravitational parameter. 
Initial conditions are sampled to yield bound elliptical orbits.
The planar two-body setting serves as a more demanding benchmark because, unlike the Van der Pol oscillator, which has a single low-dimensional attractor, it is a conservative system with a continuous family of elliptical orbits determined by energy and angular momentum. This diversity leads to longer-term correlations and makes the function space effectively higher-rank. Consequently, the effective rank is not sharply defined: the eigenvalue spectra of the coefficient covariance decay more gradually.

In the planar two-body setting (Fig. \ref{fig:kepler}), the space of possible orbits is five-dimensional: one dimension for the central mass $\mu$, and four for the orbital parameters $(a,e,\omega,\nu)$ describing size, shape, in-plane orientation, and phase. Consistent with this structure, the explained variance analysis shows that five to six bases capture more than 99\% of the variance. This is consistent with the gradual eigenvalue decay in the scree plot. Unlike the Van der Pol oscillator, where variance concentrates in a few dominant modes, the two-body system requires additional bases to represent its richer dynamics.
Streamplots of the learned bases reveal interpretable structures aligned with orbital dynamics, highlighting that the bases are compact and physically meaningful.

The key takeaway is that function encoders adapt to the complexity of the system. For Van der Pol, they uncover a simple two-dimensional structure. 
For the two-body problem, they scale up to model more intricate behavior while still yielding a compact and informative representation. 
This is especially valuable for real-world applications such as embedded controllers, onboard satellite orbit determination, or autonomous navigation, where compact models that also offer guarantees of correctness are required under strict computational limits.

\section{Conclusion \& Future Work}

We develop a principled connection between function encoders, neural models that learn compact feature maps, and RKHS theory. Our contributions include PCA-guided algorithms for selecting compact bases and finite-sample generalization bounds that extend kernel-style analysis to neural predictors. 
Function encoders combine the efficiency of parametric models with the rigor of kernel methods, enabling scalable yet principled learning. 
Several open directions remain: developing non-heuristic criteria for basis selection, deepening theoretical links with kernel methods, and extending the framework to applications such as statistical and scientific modeling. These opportunities point toward a broader role for function encoders as efficient neural models with kernel-level guarantees.

\section{Reproducibility Statement}

All code, data generation scripts, and hyperparameter settings are available at \url{https://github.com/suann124/function_encoder_kernels}. 
Additional implementation details and results are provided in the appendix.
Proofs of the theoretical results, including Rademacher complexity and PAC-Bayes bounds, are provided in the appendix with explicit assumptions stated. For experiments, we describe the polynomial benchmark, Van der Pol oscillator, and two-body problem in detail in the appendix. Additional diagnostic plots and training details are also included.

\bibliography{bibliography}

\begin{thebibliography}{24}
\providecommand{\natexlab}[1]{#1}
\providecommand{\url}[1]{\texttt{#1}}
\expandafter\ifx\csname urlstyle\endcsname\relax
  \providecommand{\doi}[1]{doi: #1}\else
  \providecommand{\doi}{doi: \begingroup \urlstyle{rm}\Url}\fi

\bibitem[Aharon et~al.(2006)Aharon, Elad, and Bruckstein]{10.1109/TSP.2006.881199}
M.~Aharon, M.~Elad, and A.~Bruckstein.
\newblock {K-SVD}: An algorithm for designing overcomplete dictionaries for sparse representation.
\newblock \emph{Transactions on Signal Processing}, 2006.
\newblock \doi{10.1109/TSP.2006.881199}.

\bibitem[Bartlett \& Mendelson(2003)Bartlett and Mendelson]{10.5555/944919.944944}
Peter~L. Bartlett and Shahar Mendelson.
\newblock {Rademacher} and {Gaussian} complexities: risk bounds and structural results.
\newblock \emph{Journal of Machine Learning Research}, 3, 2003.
\newblock ISSN 1532-4435.

\bibitem[Brunton et~al.(2016)Brunton, Proctor, and Kutz]{doi:10.1073/pnas.1517384113}
Steven~L. Brunton, Joshua~L. Proctor, and J.~Nathan Kutz.
\newblock Discovering governing equations from data by sparse identification of nonlinear dynamical systems.
\newblock \emph{Proceedings of the National Academy of Sciences}, 2016.
\newblock \doi{10.1073/pnas.1517384113}.

\bibitem[Christmann \& Steinwart(2008)Christmann and Steinwart]{christmann2008support}
Andreas Christmann and Ingo Steinwart.
\newblock \emph{Support vector machines}.
\newblock Springer, 2008.

\bibitem[Drineas \& Mahoney(2005)Drineas and Mahoney]{JMLR:v6:drineas05a}
Petros Drineas and Michael~W. Mahoney.
\newblock On the {Nystrom} method for approximating a {Gram} matrix for improved kernel-based learning.
\newblock \emph{Journal of Machine Learning Research}, 2005.

\bibitem[Ingebrand et~al.(2024{\natexlab{a}})Ingebrand, Thorpe, and Topcu]{NEURIPS2024_7ce9df1d}
Tyler Ingebrand, Adam~J. Thorpe, and Ufuk Topcu.
\newblock Zero-shot transfer of neural {ODEs}.
\newblock In \emph{Advances in Neural Information Processing Systems}, 2024{\natexlab{a}}.

\bibitem[Ingebrand et~al.(2024{\natexlab{b}})Ingebrand, Zhang, and Topcu]{pmlr-v235-ingebrand24a}
Tyler Ingebrand, Amy Zhang, and Ufuk Topcu.
\newblock Zero-shot reinforcement learning via function encoders.
\newblock In \emph{International Conference on Machine Learning}, 2024{\natexlab{b}}.

\bibitem[Ingebrand et~al.(2025)Ingebrand, Thorpe, and Topcu]{ingebrand2025function}
Tyler Ingebrand, Adam Thorpe, and Ufuk Topcu.
\newblock Function encoders: A principled approach to transfer learning in hilbert spaces.
\newblock In \emph{International Conference on Machine Learning}, 2025.

\bibitem[Ji \& Fu(2024)Ji and Fu]{pmlr-v222-ji24b}
Chunlin Ji and Yuhao Fu.
\newblock Deep kernel regression with finite learnable kernels.
\newblock In \emph{Asian Conference on Machine Learning}, 2024.

\bibitem[Kakade et~al.(2008)Kakade, Sridharan, and Tewari]{kakade2008reglineargenbounds}
Sham~M Kakade, Karthik Sridharan, and Ambuj Tewari.
\newblock On the complexity of linear prediction: Risk bounds, margin bounds, and regularization.
\newblock In \emph{Advances in Neural Information Processing Systems}, 2008.

\bibitem[Le et~al.(2013)Le, Sarlos, and Smola]{41466}
Quoc Le, Tamas Sarlos, and Alex Smola.
\newblock Fastfood - approximating kernel expansions in loglinear time.
\newblock In \emph{International Conference on Machine Learning}, 2013.

\bibitem[Lee et~al.(2022)Lee, Trask, and Stinis]{pmlr-v190-lee22a}
Kookjin Lee, Nathaniel Trask, and Panos Stinis.
\newblock Structure-preserving sparse identification of nonlinear dynamics for data-driven modeling.
\newblock In \emph{Proceedings of Mathematical and Scientific Machine Learning}, 2022.

\bibitem[Lusch et~al.(2018)Lusch, Kutz, and Brunton]{lusch2018deep}
Bethany Lusch, J~Nathan Kutz, and Steven~L Brunton.
\newblock Deep learning for universal linear embeddings of nonlinear dynamics.
\newblock \emph{Nature communications}, 2018.

\bibitem[Mezi{\'c}(2005)]{mezic2005spectral}
Igor Mezi{\'c}.
\newblock Spectral properties of dynamical systems, model reduction and decompositions.
\newblock \emph{Nonlinear Dynamics}, 2005.

\bibitem[Micchelli \& Pontil(2005)Micchelli and Pontil]{10.1162/0899766052530802}
Charles~A. Micchelli and Massimiliano~A. Pontil.
\newblock On learning vector-valued functions.
\newblock \emph{Neural Computation}, 2005.
\newblock \doi{10.1162/0899766052530802}.

\bibitem[Rahimi \& Recht(2007)Rahimi and Recht]{NIPS2007_013a006f}
Ali Rahimi and Benjamin Recht.
\newblock Random features for large-scale kernel machines.
\newblock In \emph{Advances in Neural Information Processing Systems}, 2007.

\bibitem[Sch{\"o}lkopf \& Smola(2002)Sch{\"o}lkopf and Smola]{schölkopf2002learning}
B.~Sch{\"o}lkopf and A.J. Smola.
\newblock \emph{Learning with Kernels: Support Vector Machines, Regularization, Optimization, and Beyond}.
\newblock MIT Press, 2002.

\bibitem[Sch{\"o}lkopf et~al.(2001)Sch{\"o}lkopf, Herbrich, and Smola]{10.1007/3-540-44581-1_27}
Bernhard Sch{\"o}lkopf, Ralf Herbrich, and Alex~J. Smola.
\newblock A generalized representer theorem.
\newblock In \emph{Computational Learning Theory}, 2001.
\newblock ISBN 978-3-540-44581-4.

\bibitem[Takeishi et~al.(2017)Takeishi, Kawahara, and Yairi]{NIPS2017_3a835d32}
Naoya Takeishi, Yoshinobu Kawahara, and Takehisa Yairi.
\newblock Learning {Koopman} invariant subspaces for dynamic mode decomposition.
\newblock In \emph{Advances in Neural Information Processing Systems}, 2017.

\bibitem[Ward et~al.(2025{\natexlab{a}})Ward, Etter, Ingebrand, Ellis, Thorpe, and Topcu]{ward2025online}
William Ward, Sarah Etter, Tyler Ingebrand, Christian Ellis, Adam Thorpe, and Ufuk Topcu.
\newblock Online adaptation of terrain-aware dynamics for planning in unstructured environments.
\newblock In \emph{RSS 2025 Workshop on Resilient Off-road Autonomous Robotics}, 2025{\natexlab{a}}.

\bibitem[Ward et~al.(2025{\natexlab{b}})Ward, Etter, Quattrociocchi, Ellis, Thorpe, and Topcu]{ward2025zeroautonomyrealtimeonline}
William Ward, Sarah Etter, Jesse Quattrociocchi, Christian Ellis, Adam~J. Thorpe, and Ufuk Topcu.
\newblock Zero to autonomy in real-time: Online adaptation of dynamics in unstructured environments, 2025{\natexlab{b}}.

\bibitem[Williams et~al.(2015)Williams, Kevrekidis, and Rowley]{williams2015data}
Matthew~O Williams, Ioannis~G Kevrekidis, and Clarence~W Rowley.
\newblock A data--driven approximation of the {Koopman} operator: Extending dynamic mode decomposition.
\newblock \emph{Journal of Nonlinear Science}, 2015.

\bibitem[Wilson et~al.(2016)Wilson, Hu, Salakhutdinov, and Xing]{wilson2016deep}
Andrew~Gordon Wilson, Zhiting Hu, Ruslan Salakhutdinov, and Eric~P Xing.
\newblock Deep kernel learning.
\newblock In \emph{Artificial intelligence and statistics}, 2016.

\bibitem[Zhang \& Zhang(2023)Zhang and Zhang]{pmlr-v202-zhang23ax}
Yifan Zhang and Min-Ling Zhang.
\newblock Nearly-tight bounds for deep kernel learning.
\newblock In \emph{International Conference on Machine Learning}, 2023.

\end{thebibliography}
\bibliographystyle{iclr2026_conference}

\appendix

\section{The Use of Large Language Models (LLMs)}

We used a large language model as a general-purpose writing and editing assistant. Its role was limited to suggesting alternative phrasings, improving clarity and flow, and helping with the structural organization of the manuscript. All technical ideas, theoretical results, proofs, algorithms, and experiments were developed entirely by the authors.

\section{The Vector-Valued Case}
\label{section: vector valued case}

Many tasks require vector-valued outputs (e.g., multi-output regression and dynamical systems). We extend function encoders to the vector-valued setting. 
Let $\mathcal{Y} = \mathbb{R}^d$ be the output space and consider vector-valued basis functions $\psi_j : \mathcal{X} \to \mathcal{Y}$ for $j = 1, \ldots, n$. The feature map $\phi : \mathcal{X} \to \mathbb{R}^{d \times n}$ is given by,
\begin{equation}
\phi(x) = [\psi_1(x), \ldots, \psi_n(x)].
\end{equation}
A predictor is of the form
\begin{equation}
\hat f(x) = \phi(x)c = \sum_{j=1}^n c_j \psi_j(x),
\end{equation}
where $c \in \mathbb{R}^n$.

Given training data $(x_{1}, y_{1}), \ldots, (x_{m}, y_{m})$ with $y_i \in \mathbb{R}^d$, the regularized least-squares problem is
\begin{equation}
\min{c \in \mathbb{R}^n} \frac{1}{m} \sum_{i=1}^m \lVert y_i - \phi(x_i)c \rVert_2^2 + \lambda \lVert c \rVert_2^2.
\end{equation}
The normal equations are
\begin{equation}
\biggl( \frac{1}{m}\sum_{i=1}^m \phi(x_i)^\top \phi(x_i) + \lambda I_n \biggr) c = \frac{1}{m}\sum_{i=1}^m \phi(x_i)^\top y_i \in \mathbb{R}^n.
\end{equation}

This setting induces an operator-valued kernel \citep{10.1162/0899766052530802},
\begin{equation}
\kappa(x,x^{\prime}) = \sum_{j=1}^n \psi_j(x)\psi_j(x^{\prime})^\top.
\end{equation}
In matrix form, the predictor can be expressed as,
\begin{equation}
\hat f(x) = \sum_{i=1}^m \kappa(x,x_i)\alpha_i, 
\end{equation}
where the coefficients $\alpha$ are found as the solution to the linear system, $(K + \lambda m I_m) \alpha = Y$,
where $Y = [y_1,\ldots,y_m] \in \mathbb{R}^{d \times m}$ and $K$ is the Gram matrix with blocks $K_{ij}=\kappa(x_i,x_j)$.

\section{Generalization Bounds}

\subsection{Proof of Theorem \ref{thm: Rad generalization bound}}
\label{appendix: rademacher}

Here we provide a proof of Theorem \ref{thm: Rad generalization bound}, restated below.

\thma*

This result follows straightforwardly from well-known results from \citep{kakade2008reglineargenbounds} that identify the corresponding Rademacher complexity when using regularization. 

\begin{proof}
We leverage the result of Theorem 3 and Corollary 5 in \citep{kakade2008reglineargenbounds} to obtain generalization bounds for our regularized least-squares setting \eqref{eqn: function encoder least squares}. To apply these results, we begin by establishing a few simple bounds.
First, note that the least-squares solution from \eqref{eqn: least squares solution} can be bounded as
\begin{align}
    \lambda \lVert \hat{c}_\lambda \rVert_2^2 &\leq \hat{L}_m(f_{\hat{c}_\lambda}) + \lambda \lVert \hat{c}_\lambda \rVert_2^2 \leq \hat{L}_m(f_{0}) +  0 = \frac{1}{m}\sum_{i=1}^m y_i^2 \leq Y^2 \\
    \implies \lVert \hat{c}_\lambda \rVert_2 &\leq \frac{Y}{\sqrt{\lambda}},
\end{align}
where we have used the boundedness of the outputs $y_i \in \mathcal{Y}$. Define the relevant set of coefficients as 
\begin{equation}
    \mathcal{C}_\lambda \coloneqq \{c \in \mathbb{R}^n : \lVert c \rVert_2 \rbrace \leq \frac{Y}{\sqrt{\lambda}}.
\end{equation}
From this, we define the function class of interest for our setting as
\begin{equation}
    \mathcal{F}_{\mathcal{C}_\lambda} \coloneqq \lbrace f_c(x) = \langle \phi(x), c\rangle : c \in \mathcal{C}_\lambda \rbrace \subset \mathcal{F}  = \operatorname{span}\{\psi_1, \psi_2, \ldots, \psi_n\}. 
\end{equation}

Appealing to the boundedness of each fixed basis function, we can also bound the features from the mapping $x \mapsto \phi(x) \in \mathbb{R}^n$ as 
\begin{equation}
    \sup_{x \in \mathcal{X}} \ \lVert \phi(x) \rVert_2 \leq R \sqrt{n}.
\end{equation}

We now establish a Lipschitz bound with respect to the first input of the squared-error loss function. Indeed, this loss function is unbounded for \emph{arbitrary} inputs, but the boundedness of $\phi(x)$ and $y$ yields the following bound. Given two inputs $z = f_c(x), z' = f_{c'}(x')$, we have that for a $y \in \mathcal{Y}$
\begin{align}
    |\ell(z, y) - \ell(z',y)| &= | (z - y)^2 - (z'- y)^2 | \\
    &= | z + z' - 2y | |z - z'| \\
    &\leq 2 \biggl( \sup_z \ |z| + Y \biggr) |z - z'| \\
    &\leq 2 \biggl( \sup_{x \in \mathcal{X}} \lVert \phi(x) \rVert_2 \sup_{c \in \mathcal{C}_\lambda} \lVert c \rVert_2 + Y \biggr) |z - z'| \\
    &\leq 2 Y \biggl( R \sqrt{\frac{n}{\lambda}} + 1 \biggr) |z - z'|.
\end{align}

From these bounds, we can apply Theorem 3 from \citep{kakade2008reglineargenbounds} to establish that the Rademacher complexity of $\mathcal{F}_{\mathcal{C}_\lambda}$ can be bounded as
\begin{equation}
    \mathcal{R}_m(\mathcal{F}_{\mathcal{C}_\lambda}) = \mathbb{E}\biggl[ \frac{1}{m}\sup_{f \in \mathcal{F}_{\mathcal{C}_\lambda}} \sum_{i=1}^m f_c(x_i) \epsilon_i     \biggr] \leq Y R \sqrt{\frac{n}{m\lambda}},
\end{equation}
since the regularization function in our case is simply the $2$-norm of the coefficients, $c$. Furthermore, this leads to the straightforward application of Rademacher-based generalization bounds for bounded, Lipschitz loss functions, as in Corollary 5 in \citep{kakade2008reglineargenbounds},
\begin{align}
    L(f_{\hat{c}_\lambda}) &\leq \hat{L}_m(f_{\hat{c}_\lambda}) + 4\biggl(R\sqrt{\frac{n}{\lambda}} + 1\biggr)YR\sqrt{\frac{n}{m\lambda}} + 2\biggl(R\sqrt{\frac{n}{\lambda}} + 1\biggr)YR\sqrt{\frac{n \log(1/\delta)}{2m\lambda}} \\
    &= \hat{L}_m(f_{\hat{c}_\lambda}) +  2 Y^2 R \sqrt{\frac{n}{m \lambda}}\biggl( R \sqrt{\frac{n}{\lambda}} + 1 \biggr)\biggl( 2 + \sqrt{\frac{\log (1/\delta)}{2}}  \biggr) \\
    &\leq  \hat{L}_m(f_{\hat{c}_\lambda}) + \tilde{\mathcal{O}}\biggl( Y^2 R^2 \frac{n}{\lambda \sqrt{m}}\biggr),
\end{align}
as desired.  
\end{proof}

The Rademacher complexity of a class of functions indicates the inherent tradeoff between data and the number of basis functions. In our setting, we see that the Rademacher complexity for the relevant function class scales as $\mathcal{R}_m(\mathcal{F}_\lambda) \in \mathcal{O}(\sqrt{n/m\lambda})$. As we increase the number of basis functions $n$, the Rademacher complexity increases, indicating we can model more complex functions, but also risks overfitting unless we have enough data. On the other hand, increasing the number of data points $m$ and/or the regularization parameter $\lambda > 0$ makes the model less prone to overfitting and improves generalization performance. 

\subsection{Proof of Theorem \ref{thm: pac bayes analysis}}
\label{appendix: pac bayes}

In this section, we provide a proof of Theorem \ref{thm: pac bayes analysis}, restated below:

\thmb*

Prior to the proof, we provide a few useful lemmas.

\begin{lem} \label{lemma: trunc gauss Z and exp}
    Given $\eta =\mathcal{N}(a, \alpha^2 I_n)$ and the domain $\mathcal{R} = \{x \in \mathbb{R}^n : \|x - a\|_2 \leq r\} =: B(a, r)$, then the truncated distribution $\eta^\mathcal{R}$ of $\eta$ has density $d\eta^\mathcal{R}(x) = d\eta(x)/Z$, where
    \[
        Z = \frac{\gamma(n/2, r^2/2)}{\Gamma(n/2)},
    \]
    and furthermore we have that 
    \begin{equation}
        \mathbb{E}_{\eta^\mathcal{R}}\left[ \|x - a\|^2_2  \right] = 2 \alpha^2\frac{\gamma(n/2+1, r^2/2)}{\gamma(n/2, r^2/2)},
    \end{equation}
    where $\Gamma(z) = \int_0^\infty e^{-t}t^{z-1}dt$ is the Gamma function and $\gamma(s,z) = \int_0^z e^{-t}t^{s-1}dt$ is the \textit{incomplete} gamma function. 
\end{lem}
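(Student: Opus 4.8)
The plan is to reduce both quantities to one-dimensional radial integrals and identify them as incomplete gamma functions. The key observation is that if $X \sim \eta = \mathcal{N}(a, \alpha^2 I_n)$, then the scaled squared radius $S \coloneqq \|X - a\|_2^2 / \alpha^2$ is chi-squared distributed with $n$ degrees of freedom, with density $f_S(s) = \tfrac{1}{2^{n/2}\Gamma(n/2)} s^{n/2 - 1} e^{-s/2}$ on $s \ge 0$. Since $\eta^{\mathcal{R}}$ is simply $\eta$ conditioned on the ball $\mathcal{R} = B(a,r)$, every quantity of interest becomes a truncated moment of $f_S$, and the sphere geometry of $\mathbb{R}^n$ never needs to be handled directly.

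First I would compute the normalizer $Z = \eta(\mathcal{R}) = \Pr[\|X-a\|_2 \le r] = \Pr[S \le r^2/\alpha^2]$. Writing this as $\int_0^{r^2/\alpha^2} f_S(s)\,ds$ and applying the substitution $t = s/2$ collapses the constant $2^{n/2}$ against the Jacobian and leaves exactly $\tfrac{1}{\Gamma(n/2)}\gamma(n/2, \cdot)$, the lower incomplete gamma function evaluated at the transformed upper limit, which is the claimed expression for $Z$.

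Next, for the truncated second moment I would use $\|X - a\|_2^2 = \alpha^2 S$ to write $\mathbb{E}_{\eta^{\mathcal{R}}}[\|X-a\|_2^2] = \alpha^2 \, \mathbb{E}[S \mid S \le r^2/\alpha^2]$, a ratio of two truncated integrals of $f_S$. The denominator is $Z$ as above; in the numerator the extra factor of $s$ raises the exponent by one, so the same $t = s/2$ substitution now produces $\tfrac{2}{\Gamma(n/2)}\gamma(n/2+1, \cdot)$. Dividing these and restoring the $\alpha^2$ yields the ratio $2\alpha^2 \,\gamma(n/2+1,\cdot)/\gamma(n/2,\cdot)$, as claimed.

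The computation is essentially bookkeeping, so the only thing to be careful about is tracking the powers of $2$ and the $\Gamma(n/2)$ factors through each change of variables so that they cancel cleanly; the chi-squared reduction is what makes this painless, since it removes any need to invoke the surface area of $S^{n-1}$. An equivalent route, which I would fall back on if I wanted to avoid quoting the chi-squared density, is to integrate the Gaussian directly in spherical coordinates using $\mathrm{vol}(S^{n-1}) = 2\pi^{n/2}/\Gamma(n/2)$ together with the radial substitution $t = \rho^2/(2\alpha^2)$ for $\rho = \|x-a\|_2$; this makes the appearance of $\gamma(n/2,\cdot)$ and $\gamma(n/2+1,\cdot)$ fully explicit and confirms that the normalizing constants cancel.
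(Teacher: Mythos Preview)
Your approach is correct and essentially identical to the paper's: both reduce to the chi-squared distribution of the scaled squared radius $\|X-a\|_2^2/\alpha^2 \sim \chi_n^2$ (the paper via the change of variables $y=(x-a)/\alpha$) and then read off the normalizer and truncated second moment as incomplete-gamma ratios. The alternative spherical-coordinates route you mention is not used in the paper, but the main argument matches exactly.
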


\begin{proof}
For both desired equalities, we appeal to the fact that the norm squared of a multivariate normal random variable in $\mathbb{R}^n$ follows a chi-squared distribution with $n$ degrees of freedom. From this, we can write the desired integrals in terms of incomplete gamma functions and the standard Gamma function. 

First, note that we can apply a change of variables $y = (x - a)/\alpha$ to obtain
\begin{align}
    Z &= \int_{\mathcal{R}} d\eta(x) = \int_\mathcal{R} \frac{e^{-\|x-a\|^2_2/2\alpha^2}}{(2\pi \alpha^2)^{n/2}}dx = \int_{\|y\|_2 \leq r} \frac{e^{-\|y\|^2_2}{2}}{(2\pi)^{n/2}}dy \\
    &= \mathbb{P}_{Y \sim \mathcal{N}(0, \mathrm{I}_n)}\left( \|Y\|_2 \leq r \right) \\
    &= \mathbb{P}_{C \sim \chi_n^2}\left( C \leq r^2 \right) \\
    &= \frac{\gamma(n/2, r^2/2)}{\Gamma(n/2)}.
\end{align}

Now, we apply a similar change of variables to obtain
\begin{align}
    \mathbb{E}_{\eta^\mathcal{R}}\left[ \|x - a\|^2_2  \right] &= \frac{\alpha^2}{Z}\int_{\|y\|_2 \leq r} \|y\|_2^2 \frac{e^{-\|y\|^2/2}}{(2\pi)^{n/2}} dy \\
    &= \frac{\alpha^2}{Z}\mathbb{E}_{Y \sim \mathcal{N}(0, \mathrm{I}_n)}\left[\|Y\|_2^2 \mathbbm{1}\{ \|Y\|_2 \leq r\}  \right] \\
    &= \frac{\alpha^2}{Z} \mathbb{E}_{C \sim \chi^2_n} \left[ C \mathbbm{1}\{ C \leq r^2 \}  \right] \\
    &= \frac{2 \alpha^2 \Gamma(n/2)}{\gamma(n/2, r^2/2)} \frac{\gamma(n/2 + 1, r^2/2)}{\Gamma(n/2)} \\
    &= 2\alpha^2 \frac{\gamma(n/2 + 1, r^2/2)}{\gamma(n/2, r^2/2)},
\end{align}
as desired. 

\end{proof}

\begin{lem} \label{lem: incomplete gamma bound}
    For $s, a > 0$, the incomplete gamma function $\gamma(s, a) = \int_0^a e^{-t} t^{s-1}dt$ satisfies the following bound
    \begin{equation}
        \frac{\gamma(s,4a)}{\gamma(s, a)} \leq 4^s.
    \end{equation}
\end{lem}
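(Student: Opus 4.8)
The plan is to reduce the ratio $\gamma(s,4a)/\gamma(s,a)$ to a pointwise comparison of integrands by a change of variables that extracts the factor $4^s$ explicitly. First I would substitute $t = 4u$ in the numerator integral $\gamma(s,4a) = \int_0^{4a} e^{-t} t^{s-1}\,dt$. Under this substitution $dt = 4\,du$, and the limits $t \in [0, 4a]$ become $u \in [0, a]$, which yields
\begin{equation}
\gamma(s, 4a) = \int_0^a e^{-4u} (4u)^{s-1} \cdot 4 \, du = 4^s \int_0^a e^{-4u} u^{s-1} \, du.
\end{equation}
The key accounting here is that the $t^{s-1}$ factor produces $4^{s-1}$ and the Jacobian contributes another factor of $4$, combining to give exactly $4^s$.

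Next I would compare the remaining integral termwise against $\gamma(s,a) = \int_0^a e^{-u} u^{s-1}\,du$. Since $u \ge 0$ on the domain of integration we have $4u \ge u$, hence $e^{-4u} \le e^{-u}$; and because $u^{s-1} \ge 0$ for $u \in (0,a)$ when $s > 0$, the integrand $e^{-4u} u^{s-1}$ is dominated pointwise by $e^{-u} u^{s-1}$. Integrating this inequality over $[0,a]$ gives $\int_0^a e^{-4u} u^{s-1}\,du \le \int_0^a e^{-u} u^{s-1}\,du = \gamma(s,a)$, so
\begin{equation}
\gamma(s, 4a) \le 4^s \gamma(s, a).
\end{equation}

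Finally, since $s, a > 0$ guarantee $\gamma(s,a) > 0$ (the integrand is strictly positive on a set of positive measure), dividing both sides by $\gamma(s,a)$ yields the claimed bound $\gamma(s,4a)/\gamma(s,a) \le 4^s$.

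There is no genuine obstacle in this argument: the whole proof hinges on the single observation that the rescaling $t \mapsto 4u$ factors out precisely $4^s$ from $\gamma(s,4a)$, after which the residual comparison is immediate from the monotonicity $e^{-4u} \le e^{-u}$ on $[0,a]$. The only steps worth stating carefully are the strict positivity of $\gamma(s,a)$ that makes the division legitimate, and the clean interaction of the substitution with the $t^{s-1}$ factor; both are routine. I would expect to present the full proof in just a few lines.
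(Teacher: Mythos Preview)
Your proposal is correct and follows essentially the same approach as the paper's proof: both substitute $t=4u$ in the numerator to extract the factor $4^s$ and then use the pointwise bound $e^{-4u}\le e^{-u}$ on $[0,a]$ to compare the remaining integral with $\gamma(s,a)$. Your write-up is in fact cleaner, making explicit the positivity of $\gamma(s,a)$ needed for the final division.
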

\begin{proof}
    Utilizing a $u$-substitution, we can write
    \begin{align}
        \frac{\gamma(s,4a)}{\gamma(s, a)} &= \frac{\int_0^{4a} e^{-t}t^{4a-1}dt}{\int_0^a e^{-t}t^{a-1}dt} = \frac{4^s\int_0^{u} e^{-4u}t^{u-1}du}{\int_0^a e^{-t}t^{a-1}dt} \\
        &\leq 4^s \frac{\int_0^{u} e^{-u}t^{u-1}du}{\int_0^a e^{-t}t^{a-1}dt} \\
        &= 4^s,
    \end{align}
    where in the second line we have used the fact that $e^{-4u} \leq e^{-u}$.
\end{proof}

Now we turn to the proof of Theorem~\ref{thm: pac bayes analysis}.

\begin{proof}
    We begin by bounding the squared error loss function on an appropriate domain of interest, namely 
    \begin{equation}
        \mathcal{S}_0 \coloneqq \biggl\lbrace c \in \mathbb{R}^n : \lVert c \rVert_2 \leq \frac{Y}{\sqrt{\lambda}} + \sigma\sqrt{n}  \biggr\rbrace,
    \end{equation}
    where the constant $\sigma >0$ will defined hereafter, $n$ is the number of basis functions for the feature $\phi(x) \in \mathbb{R}^n$, $\lambda$ is the regularization parameter, and $Y$ is the uniform bound on outputs, $y \in \mathcal{Y}$. 
    Notice that, given our boundedness assumptions on $\phi(x)$ and $\mathcal{Y}$, we can bound the loss function as 
\begin{equation} \label{eqn: sq err loss bound}
    \ell(\langle \phi(x), c\rangle, y) \leq \max\biggl\lbrace \lVert \phi(x) \rVert_2 \lVert c \rVert_2, Y \biggr\rbrace^2 \leq \max\biggl\lbrace R \sqrt{n} \biggl(\frac{Y}{\sqrt{\lambda}} + \sigma\sqrt{n}  \biggr), Y  \biggr\rbrace^2 =:  A_{\sigma}
\end{equation}
for all $c \in \mathcal{S}_0$.

Now, consider the scaled loss function, $\tilde{\ell}(z, y) = \ell(z,y)/A_\sigma$ so that $\tilde{\ell}(z,y) \in [0, 1]$. Then, we can apply Corollary 8 from \citep{kakade2008reglineargenbounds} to obtain the following PAC-Bayes bound:
\begin{equation} \label{eqn: pac bayes orig}
    \mathbb{E}_{x,y} \Big[ \mathbb{E}_{f \sim \nu} \Big[ \tilde{\ell}(f(x), y) \Big] \Big] \leq \frac{1}{m} \sum_{i=1}^m \mathbb{E}_{f \sim \nu}\Big[\tilde{\ell}(f(x_i), y_i) \Big] + 4.5\sqrt{ \frac{\max\{ D_{KL}(\nu || \nu_0), 2\}}{m}}  + \sqrt{\frac{\log(1/\delta)}{2m}},
\end{equation}
where $\nu_0$ and $\nu$ are respectively prior and posterior distributions over $f \in \mathcal{F}$. 

Let $\Phi(c) = \sum_{i=1}^n c_i \psi_i \in \mathcal{F} \subset \mathcal{H}$ represent the mapping of coefficient in $\mathbb{R}^n$ to functions in $\mathcal{H}$. Furthermore, introduce the domain
\begin{equation}
    \mathcal{S} \coloneqq \lbrace c \in \mathcal{S} : \|c - \hat{c}_\lambda\|_2 \leq \sigma \sqrt{n}  \rbrace,
\end{equation}
and we then define the following distributions:
\begin{itemize}
    \item $\mu_0^{\mathcal{S}_0} = \mathcal{N}_{\mathcal{S}_0}(0, \sigma_0^2\mathrm{I}_n)$, an isotropic, mean-zero multivariate Gaussian truncated to the domain $\mathcal{S}_0$.
    \item $\mu_0 = \mathcal{N}(0, \sigma_0^2\mathrm{I}_n)$, the \textit{untruncated} counterpart of $\mu_0^{\mathcal{S}_0}$ defined above. That is, the density $d\mu_0^{\mathcal{S}_0}(x) = d\mu_0(x)/Z_0$ for some normalization constant $Z_0  > 0$.
    \item $\mu^{\mathcal{S}} = \mathcal{N}_{\mathcal{S}}(\hat{c}_\lambda, \sigma^2\mathrm{I}_n)$, an isotropic multivariate Gaussian truncated to the domain $\mathcal{S} \subset \mathcal{S}_0$.
    \item $\mu = \mathcal{N}(\hat{c}_\lambda, \sigma^2\mathrm{I}_n)$, the \textit{untruncated} counterpart of $\mu^{\mathcal{S}}$ defined above. That is, the density $d\mu^{\mathcal{S}}(x) = d\mu(x) / Z$ for some normalization constant $Z  > 0$.
\end{itemize}

We define the prior $\nu_0$ to be the push-forward of the truncated multivariate Gaussian, $\mu_0^{\mathcal{S}}$,  under the mapping $\Phi$
\begin{equation}
    f \sim \nu_0 = \Phi_\# \mu_0^{\mathcal{S}}.
\end{equation}
Similarly, we define the posterior $\nu$ as:
\begin{equation}
    f \sim \nu = \Phi_\# \mu^{\mathcal{S}}.
\end{equation}
We choose $\sigma_0, \sigma$ in what follows to simplify the terms of the PAC-Bayes bound in \eqref{eqn: pac bayes orig}. 

Due to the convenient form of the $\nu$ as the push forward of a multivariate Gaussian truncated to a ball centered at $\hat{c}_\lambda$, the expectations with respect to $\nu$ of the squared error loss in \eqref{eqn: pac bayes orig} can be replaced by the mean of $\nu$,  $f = \Phi(\hat{c}_\lambda)$:
\begin{equation} \label{eqn: pac bayes 2}
    \mathbb{E}_{x,y} \Big[ \tilde{\ell}(f(x), y) \Big] \leq \frac{1}{m} \sum_{i=1}^m \tilde{\ell}(f(x_i), y_i)  + 4.5\sqrt{ \frac{\max\{ D_{KL}(\nu || \nu_0), 2\}}{m}}  + \sqrt{\frac{\log(1/\delta)}{2m}}.
\end{equation}

Furthermore, due to the injectivity of the mapping $\Phi : \mathbb{R}^n \rightarrow \mathcal{F}$, the KL divergence between the posterior, $\nu$, and prior, $\nu_0$, can be replaced by the KL divergence between $\mu^{\mathcal{S}}$ and $\mu_0^{\mathcal{S}_0}$; that is, 
\begin{equation}
    D_{KL}(\nu || \nu_0) = D_{KL}(\Phi_\# \mu^{\mathcal{S}} || \Phi_\# \mu_0^{\mathcal{S}_0}) = D_{KL}(\mu^{\mathcal{S}} || \mu_0^{\mathcal{S}_0}).
\end{equation}
Now, since $\mathcal{S} \subset \mathcal{S}_0$, the KL divergence between these truncated, multivariate Gaussians can be computed as follows:
\begin{align}
    D_{KL}(\mu^{\mathcal{S}} || \mu_0^{\mathcal{S}_0}) &= \mathbb{E}_{\mu^\mathcal{S}}\left[ \log\left(\frac{Z_0}{Z} \left(\frac{\sigma_0^2}{\sigma^2} \right)^{n/2} \exp \left(\frac{\|x\|^2}{2\sigma_0^2} -\frac{\|x - \hat{c}_\lambda\|^2}{2\sigma^2} \right) \right) \right] \\
    &= \log\left( \frac{Z_0}{Z} \right) + \frac{n}{2}\log\left(\frac{\sigma_0^2}{\sigma^2} \right) + \frac{1}{2\sigma_0^2}\underbrace{\mathbb{E}_{\mu^{\mathcal{S}}}\left[ \|x\|^2 \right]}_{(\mathrm{I})} - \frac{1}{2\sigma^2} \underbrace{\mathbb{E}_{\mu^{\mathcal{S}}}\left[ \|x - \hat{c}_\lambda \|^2 \right]}_{(\mathrm{II})}. \label{eqn: kl trunc gauss 1}
\end{align}
Note that $(\mathrm{I})$ can be written as follows
\begin{align}
    \mathbb{E}_{\mu^{\mathcal{S}}}\left[ \|x\|^2 \right] &= \mathbb{E}_{\mu^{\mathcal{S}}}\left[ \|x - \hat{c}_\lambda + \hat{c}_\lambda \|^2 \right] \\
    &= (\mathrm{II}) \  + \|\hat{c}_\lambda\|^2 + 2 \hat{c}_\lambda^\top \mathbb{E}_{\mu^{\mathcal{S}}}\left[ x - \hat{c}_\lambda \right] \\
    &= (\mathrm{II}) \  + \|\hat{c}_\lambda\|^2,
\end{align}
so we can write \eqref{eqn: kl trunc gauss 1} as
\begin{align}
    D_{KL}(\mu^{\mathcal{S}} || \mu_0^{\mathcal{S}_0}) &= \log\left( \frac{Z_0}{Z} \right) + \frac{n}{2}\log\left(\frac{\sigma_0^2}{\sigma^2} \right) + \frac{\|\hat{c}_\lambda\|^2}{2\sigma_0^2} + \frac{\sigma^2 - \sigma_0^2}{2\sigma_0^2\sigma^2} \mathbb{E}_{\mu^{\mathcal{S}}}\left[ \|x - \hat{c}_\lambda \|^2 \right].
\end{align}

Applying Lemma~\ref{lemma: trunc gauss Z and exp}, we choose $\sigma = Y/\sqrt{n\lambda}$ from which we can simplify and bound the ratio
\begin{equation}
    \frac{Z_0}{Z} = \frac{\gamma\left( \frac{n}{2}, \frac{1}{2}\left(  \frac{Y}{\sqrt{\lambda}} + \sigma\sqrt{n} \right)^2\right)}{\gamma\left( \frac{n}{2}, \frac{\sigma^2 n}{2}\right)} = \frac{\gamma\left( \frac{n}{2},  \frac{2Y^2}{\lambda} \right)}{\gamma\left( \frac{n}{2}, \frac{Y^2}{2\lambda}\right)} \leq 2^n,
\end{equation}
where in the last inequality we have used Lemma~\ref{lem: incomplete gamma bound}. Finally, we choose $\sigma_0^2 = 4 \sigma^2$, and recalling that $\|\hat{c}_\lambda\|_2 \leq Y/\sqrt{\lambda}$, we can use Lemma~\ref{lemma: trunc gauss Z and exp} again to bound
\begin{align}
    D_{KL}(\mu^{\mathcal{S}} || \mu_0^{\mathcal{S}_0}) &\leq   n \log 2 + \frac{n}{2}\log\left(4 \right) + \frac{\|\hat{c}_\lambda\|_2^2}{8\sigma^2} + \frac{-3\sigma^2}{8\sigma^4} \mathbb{E}_{\mu^{\mathcal{S}}}\left[ \|x - \hat{c}_\lambda \|^2 \right] \\
    &\leq n \log 2 + \frac{n}{2}\log\left(4 \right) + \frac{n}{8}  - \frac{3}{4} \frac{\gamma(n/2 +1, \sigma^2n/2)}{\gamma(n/2, \sigma^2n/2)} \\
    &= n \log 2 + \frac{n}{2}\log\left(4 \right) + \frac{n}{8}  -  \frac{3}{4} \frac{\gamma(n/2 +1, Y^2/2\lambda)}{\gamma(n/2, Y^2/2\lambda)} \\
    &\leq n \left( 2 \log 2 + \frac{1}{8}\right) \\ 
    &\leq 2n.
\end{align}

Furthermore, our choice of $\sigma = Y/\sqrt{n\lambda}$ gives that \eqref{eqn: sq err loss bound} simplifies to $A_\sigma = Y^2\max\{ \frac{nR^2}{\lambda}, 1\} = nR^2Y^2/\lambda$. Combining it all together, we can multiply both sides of \eqref{eqn: pac bayes 2} by $A_\sigma$ to obtain  
\begin{align}
    \mathbb{E}_{x,y} \Big[ \ell(f(x), y) \Big] &\leq \frac{1}{m} \sum_{i=1}^m \ell(f(x_i), y_i)  +  A_\sigma \left(4.5\sqrt{ \frac{\max\{ D_{KL}(\nu || \nu_0), 2\}}{m}}  +  \sqrt{\frac{\log(1/\delta)}{2m}}\right) \\
    &\leq \frac{1}{m} \sum_{i=1}^m \ell(f(x_i), y_i)  + \frac{nR^2Y^2}{\lambda \sqrt{m}} \left( 4.5\sqrt{2n} + \sqrt{\frac{\log(1/\delta)}{2}}\right)  \\
    &\lesssim \frac{1}{m} \sum_{i=1}^m \ell(f(x_i), y_i)  + \tilde{\mathcal{O}}\left(\frac{R^2Y^2n^{3/2}}{\lambda \sqrt{m}}\right),
\end{align}
as desired. 
\end{proof}

\section{Additional Results, Experimental Setup, and Parameters}
\label{appendix:exp-params}

\subsection{Polynomial Regression}
\label{appendix:pol_reg}

\begin{figure}[t]
    \centering
    \includegraphics[width=5.5in]{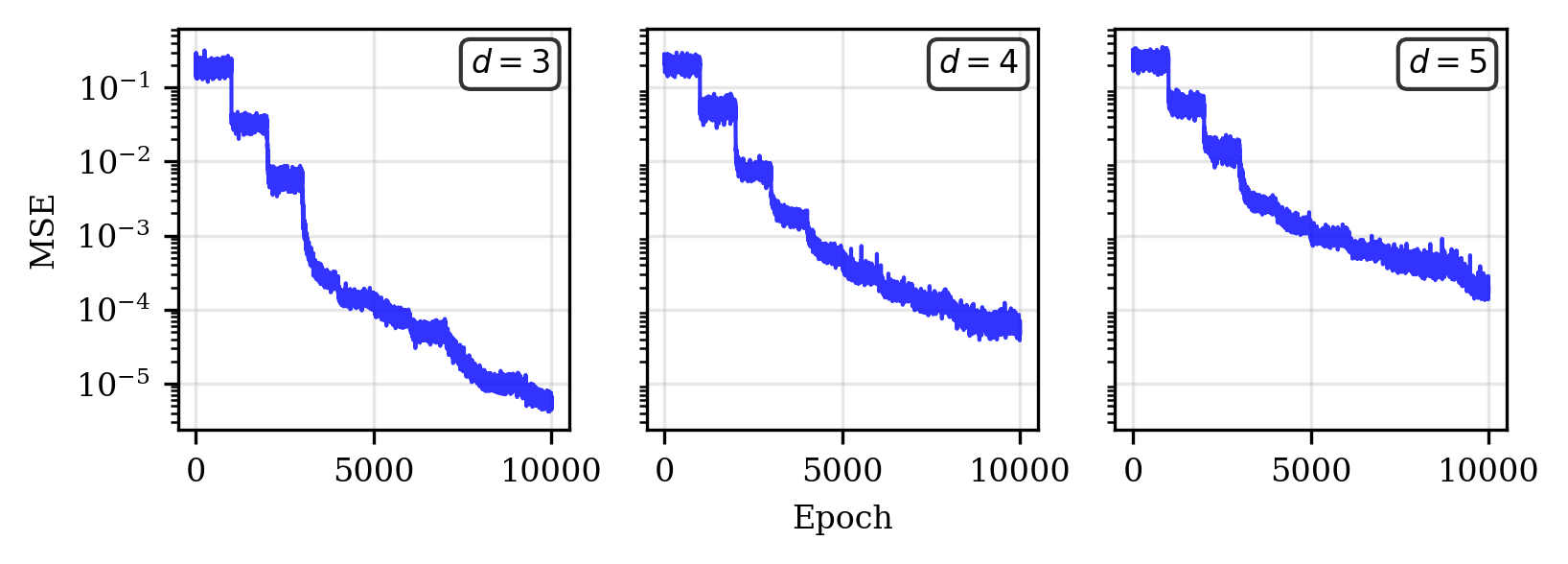}
    \caption{Mean squared error of the progressive training algorithm. We see a plateau in the MSE reduction after the number of basis functions matches the intrinsic dimension of the data.}
    \label{fig:poly-mse}
\end{figure}

\begin{figure}[t]
    \centering
    \includegraphics[width=5.5in]{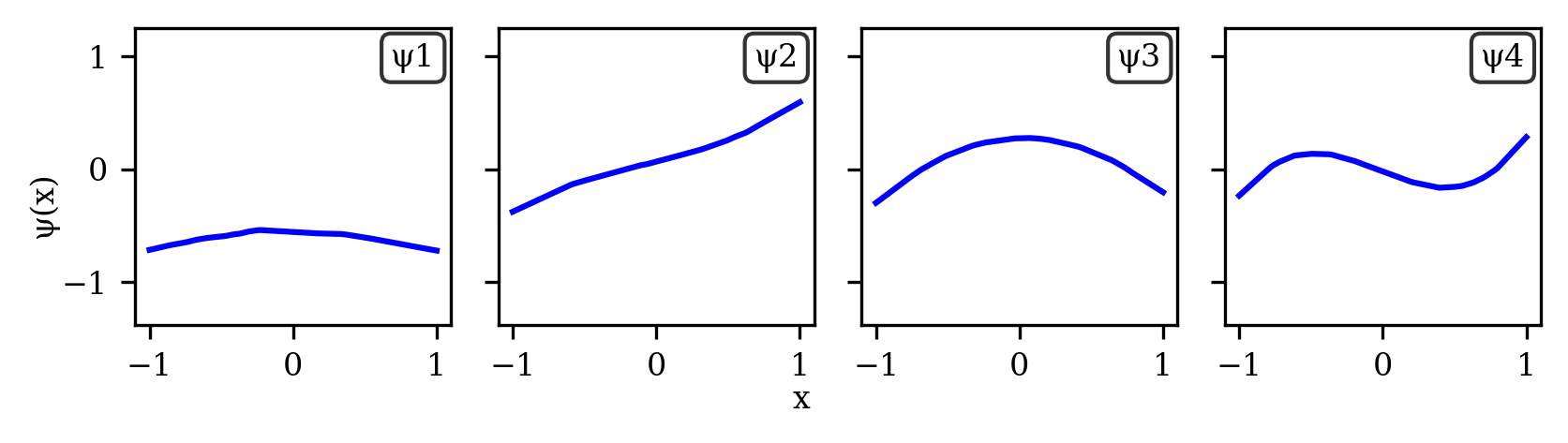}
    \caption{As expected, the progressive approach produces basis functions that mirror the natural ordering found in polynomial basis expansion, where the basis are approximately constant, linear, quadratic, and cubic. This structure emerges because each new basis is trained to capture remaining variance after freezing the previous ones, resulting in interpretable features.}
    \label{fig:poly-basis-prog}
\end{figure}

\begin{figure}[t]
\centering
\includegraphics[width=5.5in]{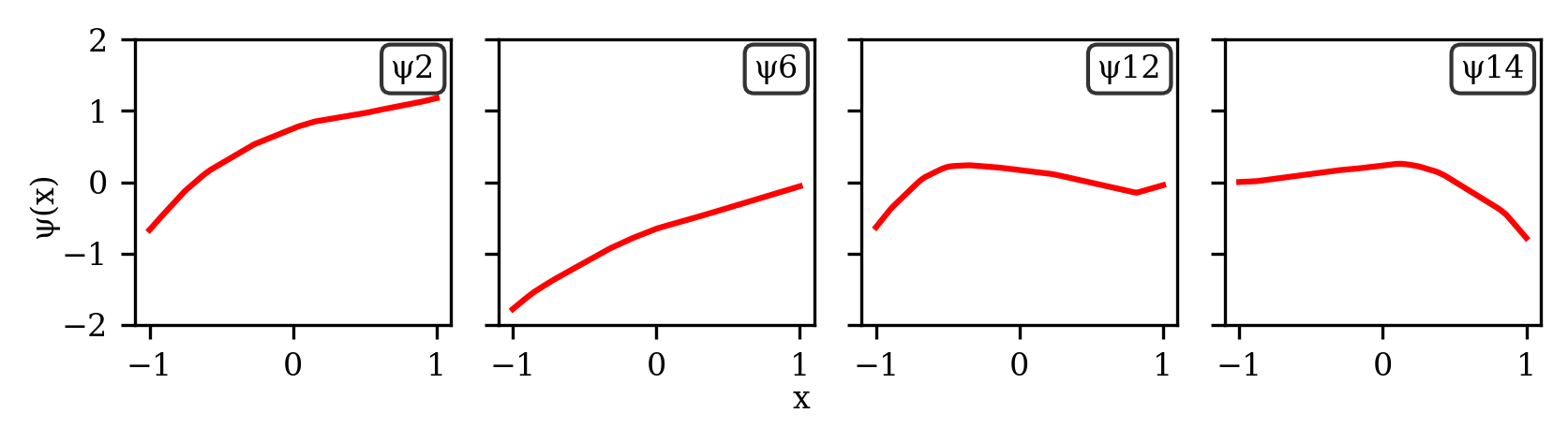}
\caption{Pruning from an overparameterized encoder yields basis functions with less interpretable structure. Since we trained all bases at the same time, they lack the ordered progression seen in the progressive approach.}
\label{fig:poly-basis-prune}
\end{figure}

For each degree $d$, we sample random polynomials by drawing coefficients independently and identically distributed from $[-1, 1]$. For each polynomial, we generate $1000$ input-output pairs to form a dataset $D_{i} = \lbrace (x_{j}, f_{i}(x_{j})) \rbrace_{j=1}^{1000}$ \citep[c.f.][]{ingebrand2025function}. We use $100$ sampled evaluation points to compute the coefficients at inference time. 
Each learned basis function is implemented as a one-hidden-layer MLP with width 32. 

For the progressive training approach in Algorithm \ref{algo: progressive training}, we begin with a single basis function and add new basis functions sequentially. After training each basis function, we compute the coefficient matrix $C$ across all datasets, perform PCA, and check the explained variance of the latest component. Training ends once the explained variance drops below a user-specified threshold ($\tau = 1\%$ in our experiments). 
For the train-then-prune algorithm (Algorithm \ref{algo: train then prune}), we begin with an over-specified function encoder with $n = 20$ basis functions. After training, we determine the effective rank using PCA as before, i.e., the smallest $n$ such that the cumulative explained variance is greater than $99\%$. We then select the basis functions corresponding to the principal directions, prune the remaining basis functions, and fine-tune the network for a small number of epochs. 

\subsubsection{Deep Kernel}
\label{appendix:dkl}

Deep kernel learning (DKL) combines neural networks with kernel methods by using a learned feature map $\phi_\theta$ inside a kernel, typically $k_\theta(x,x') = k_{\text{base}}(\phi_\theta(x), \phi_\theta(x'))$ \citep{wilson2016deep}. In its standard form, DKL is trained for a single supervised dataset by optimizing the kernel parameters $\theta$ (and possibly kernel hyperparameters) to maximize marginal likelihood in a Gaussian process or to minimize empirical loss in kernel ridge regression. The method is designed to adapt the kernel geometry for a specific task rather than to learn a function space shared across tasks.

Our setting differs: function encoders learn basis functions that span a reusable function class across many training functions. To compare fairly, we adapt deep kernels to the multi-task setting. 

Specifically, for each mini-batch of training functions $\lbrace D_j \rbrace$, we construct the evaluation Gram matrix
$K_{ij} = k_\theta(x_i, x_j)$,
and solve the kernel ridge regression system $(K + \lambda m I)\alpha = y$ for coefficients $\alpha$. 
Training then backpropagates through the $\mathcal{O}(m^3)$ Gram matrix inversion, updating $\theta$ via gradient descent. This procedure is repeated across functions, treating DKL as if it were learning a space of functions rather than a single-task kernel.

We train the deep kernels and function encoders over the same space of degree-3 polynomials as in Appendix \ref{appendix:pol_reg}. 
We separate the data into a set of query points and a set of evaluation points to compute the coefficients. 
The inputs are mapped through the feature network $\theta$ to produce the evaluation embedding $Z_E\in\mathbb{R}^{m\times d}$ and the query embedding $Z_Q\in\mathbb{R}^{q\times d}$, with $m$ the number of evaluation points, $q$ the number of query points, and $d$ the output dimension of the network. 

The first deep kernel is an RBF kernel in feature space with automatic relevance determination lengthscales, 
\begin{equation}
    k_\theta(x,x') = \sigma^2 \exp \biggl(-\frac{1}{2} \sum_{j=1}^d \frac{(\theta_j(x)-\theta_j(x'))^2}{\ell_j^2}\biggr)
\end{equation}
where $\ell\in\mathbb{R}^m_{>0}$ are learnable lengthscales and $\sigma^2>0$ is the output scale. These embeddings define Gram matrices $K_{EE}=k_\theta(Z_E,Z_E)$ and $K_{QE}=k_\theta(Z_Q,Z_E)$. A kernel ridge regression predictor is fitted on the example set by solving $\alpha_E=(K_{EE}+\lambda I)^{-1}y_E$, and predictions for the full dataset are given by $\hat{y}=K_{DE}\alpha_E$. The model parameters, including both the feature extractor $\theta$ and the kernel parameters, are optimized by minimizing the mean-squared error on $\hat{y}$ against the full targets $y$, averaged across functions in the batch. 

For the second part of the experiments, we replace the RBF kernel with a linear kernel. With the same feature extractor $\theta$ and embeddings $Z_E, Z_Q$ as above, the kernel is 
\begin{equation}
    k_\theta(x,x') =  \theta(x)^\top \theta(x') .
\end{equation}
 This induces the Gram matrices $K_{EE}= Z_E Z_E^\top$ and $K_{QE}= Z_Q Z_E^\top$. The training and prediction follow the same kernel ridge regression procedure described in the previous paragraph

The neural network for the RBF deep kernel is a one-hidden-layer MLP with a width of 64. The function encoder is the same as in \ref{appendix:pol_reg}. The linear deep kernel uses the same neural network architecture as the function encoder.

\subsection{Van der Pol Oscillator}

\begin{figure}[t]
\centering
\includegraphics[width=5.5in]{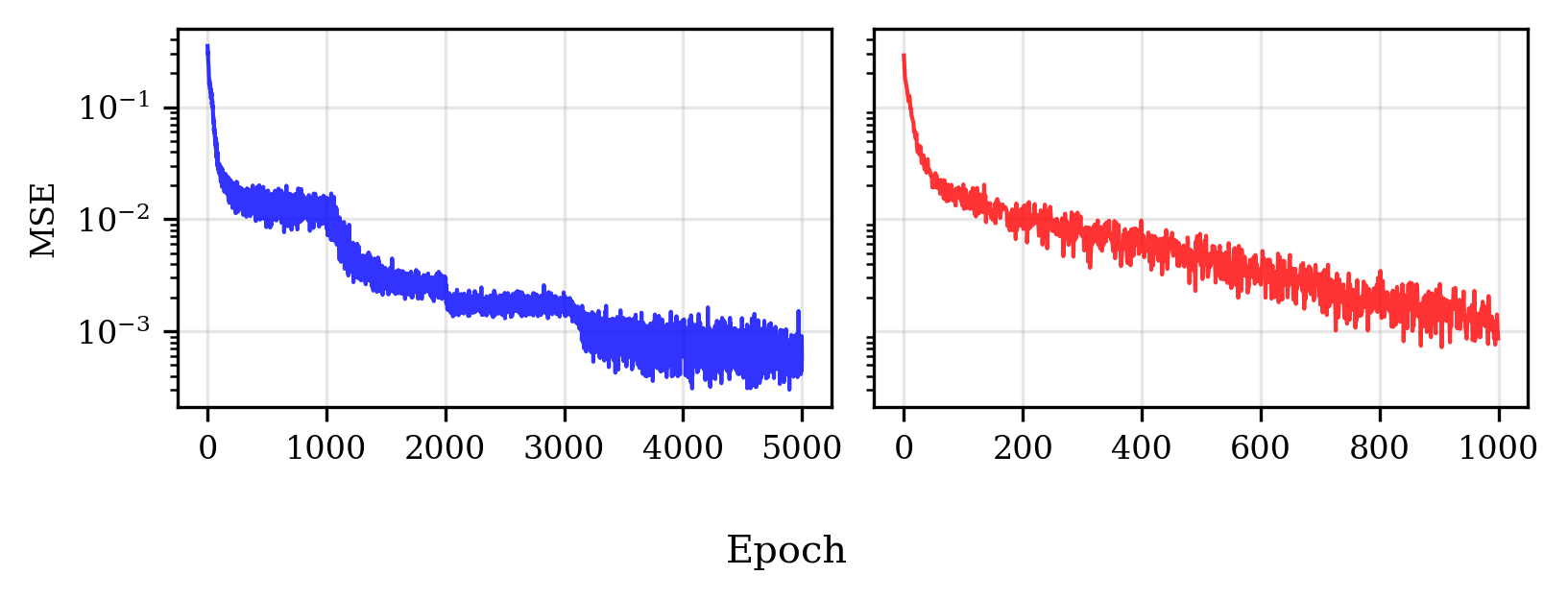}
\caption{Mean squared error for the progressive training algorithm (left) and the train-then-prune algorithm (right) on the Van der Pol system.}
\label{fig:vdp_mse}
\end{figure}

\begin{figure}[t]
\centering
\includegraphics{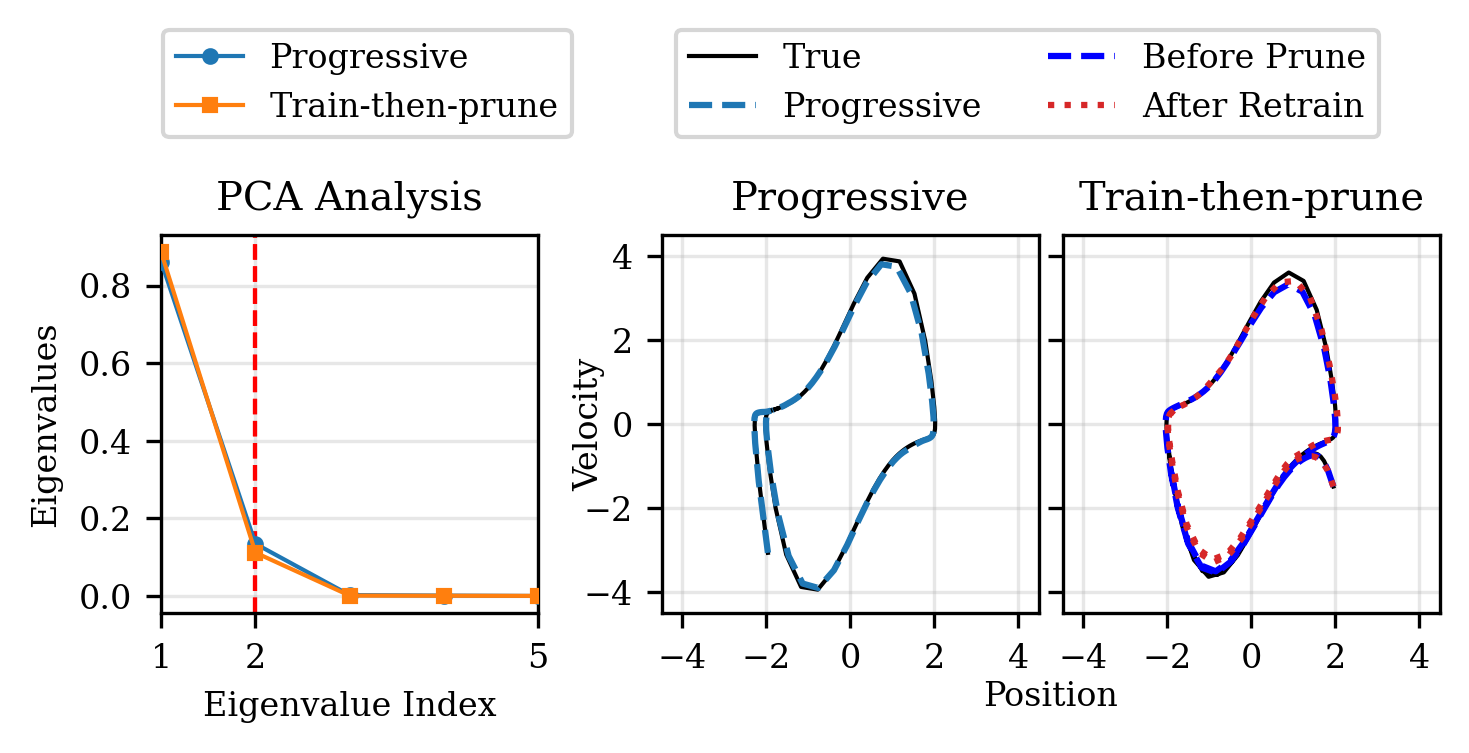}
\caption{(Left) Eigenvalue spectra of the covariance matrix for both the progressive and train-then-prune approaches reveal a similar trend and identify that only two basis functions are needed to capture nearly all variance. (Middle and Right) Predicted trajectories using the two methods accurately capture the nonlinear oscillatory dynamics using just two basis functions.}
\label{fig:vdp_analysis}
\end{figure}

We evaluate our method on the Van der Pol oscillator, defined by $\dot{x}_1 = x_2$ and $\dot{x}_2 = \mu\,(1 - x_1^2)\,x_2 - x_1$ with $\mu \in [0.5, 2.5]$. Training data are generated by uniformly sampling initial conditions $x_0 \in [-3.5, 3.5]^2$ and integrating trajectories over $t \in [0, 10]$ with time step $\Delta t = 0.1$. We generate a dataset with $1000$ query points and $100$ evaluation points. Both the progressive training and train-then-prune methods use an MLP with two hidden layers of width $64$, mapping inputs $(x_1, x_2, \mu)$ to a two-dimensional output. The progressive training starts with $5$ basis functions, whereas train-then-prune starts with $10$.  The mean squared error is plotted in Figure \ref{fig:vdp_mse}. After training, our algorithms produce accurate models with fewer basis functions. A representative trajectory from each algorithm on the Van der Pol system is shown in Fig. \ref{fig:vdp_analysis}.

\subsection{Two-Body Problem}

\begin{figure}[t]
    \centering
    \includegraphics[width=5.5in]{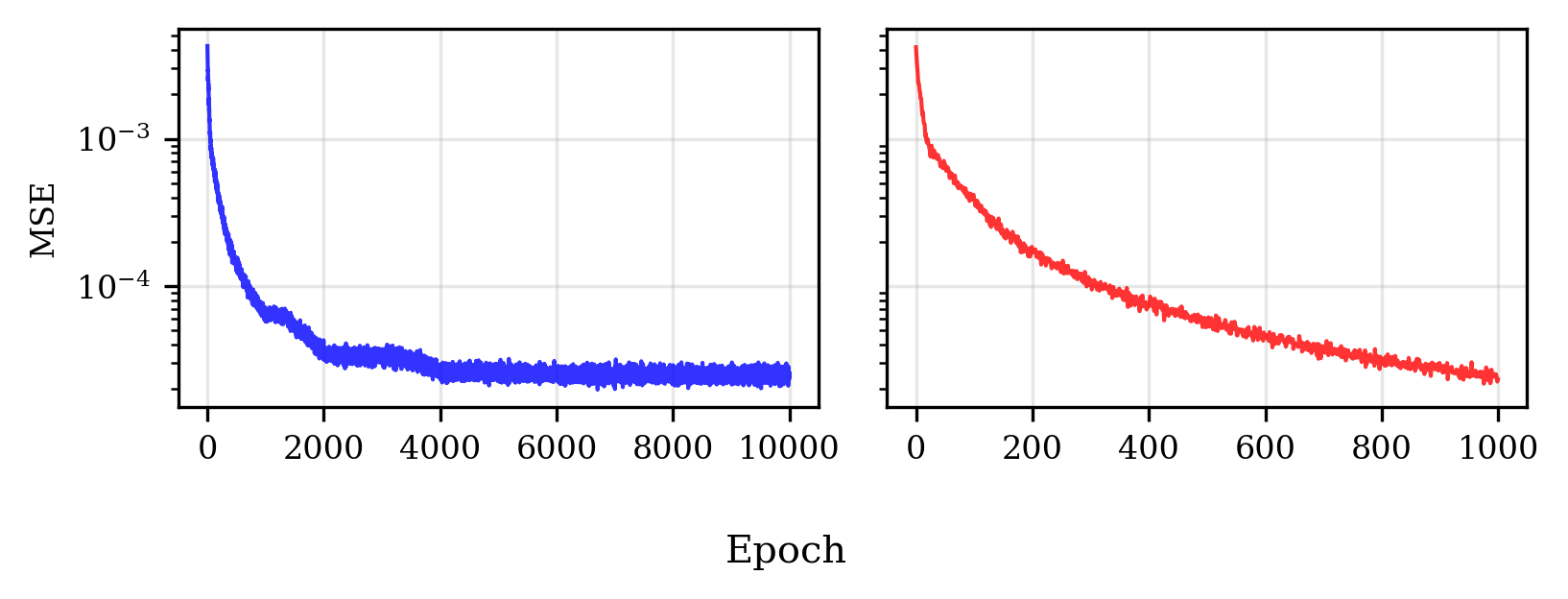}
    \caption{Training loss (MSE) curves for the two-body system. (Left) Progressive training gradually reduces error over multiple stages as each basis function is added and optimized. Unlike the Van der Pol system, the MSE loss shows a more gradual decrease due to the complexity of the system. (Right) Train-then-prune training proceeds with all bases jointly, showing steady but slower convergence.}
    \label{fig:kepler_mse}
\end{figure}

\begin{figure}
    \centering    \includegraphics{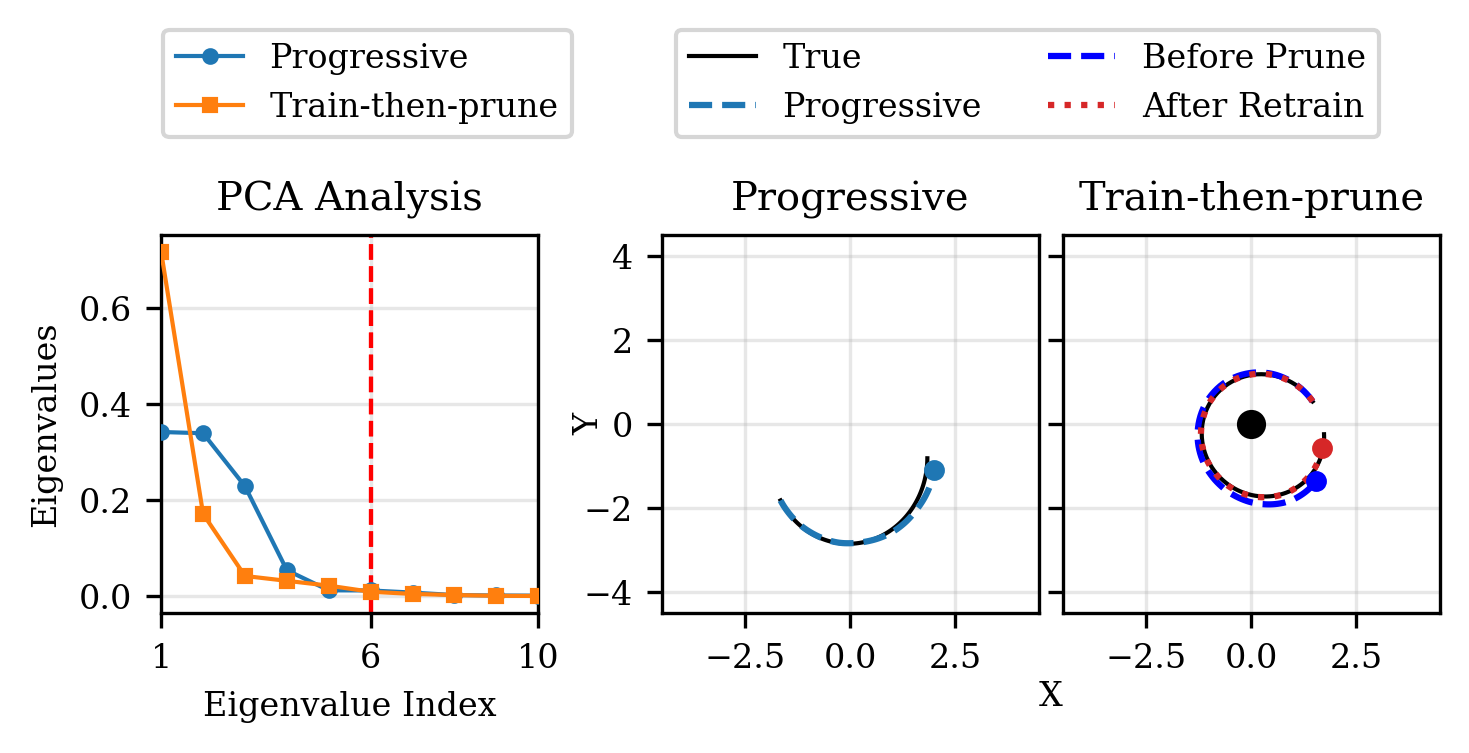}
    \caption{(Left) For the two-body problem, eigenvalue decay is more gradual, reflecting the increased complexity of elliptical orbits. Both methods require 5-6 bases to capture the function space. Unlike the Polynomial and Van der Pol systems, the two approaches show more distinct behavior. Both methods (middle and right) successfully learn a compact function encoder that accurately reproduces elliptical dynamics.}
    \label{fig:kepler_analysis}
\end{figure}

\begin{figure}
    \centering
    \includegraphics[width=\textwidth,keepaspectratio]{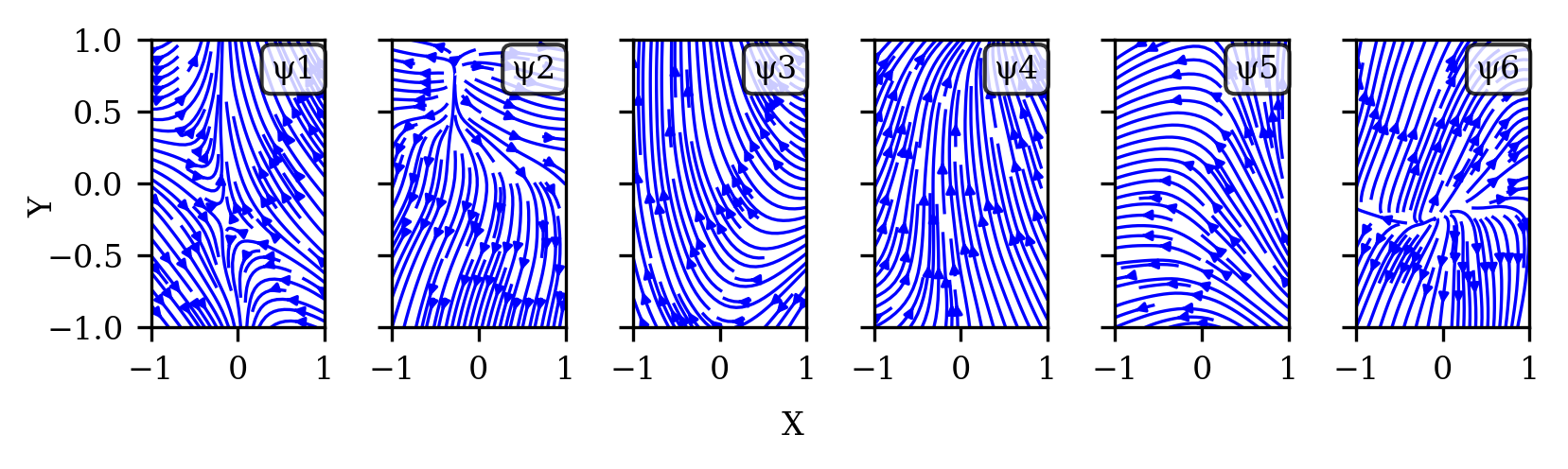}
    \caption{
    The progressive approach yields basis functions that exhibit more complex patterns across the state space for the two-body problem. 
    }
    \label{fig:streamplot_prog}
\end{figure}

\begin{figure}
    \centering
    \includegraphics[width=\textwidth,keepaspectratio]{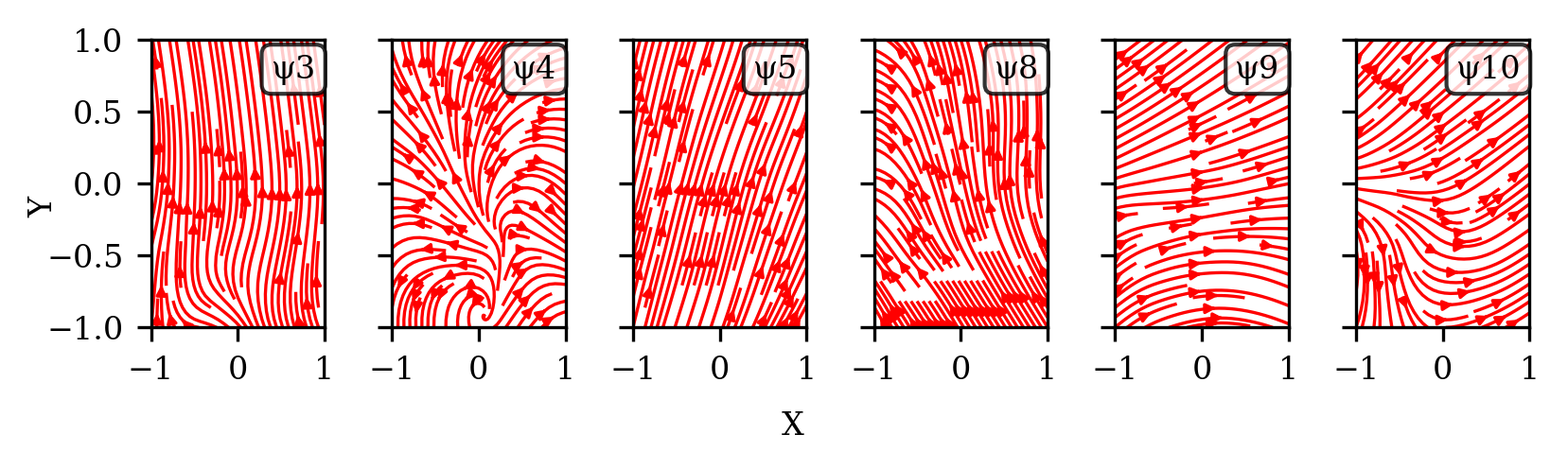}
    \caption{Train-then-prune approach produces more uniform basis functions for the two-body problem.}

    \label{fig:streamplot_prune}
\end{figure}

We study the planar two-body problem with point-mass dynamics $\ddot{\mathbf r} = -\mu\,\mathbf r/\|\mathbf r\|^{3}$. We generate initial conditions by sampling Keplerian elements with semi-major axis $a \in [1.0, 3.0]$, eccentricity $e \in [0, 0.7]$ (to avoid singularities), argument of periapsis $\omega \in [0, 2\pi]$, and gravitational parameter $\mu \in [0.8, 1.1]$. The sampled elements are converted to Cartesian states and propagated over $t \in [0, 50]$ with time step $\Delta t = 0.05$. Each trajectory is sampled at $1{,}000$ time points, with $100$ points held out for evaluation. Both training variants share an MLP with two hidden layers of width $64$ and take as input the gravitational parameter together with the Cartesian coordinates generated from the orbital elements; the progressive variant is initialized with $5$ basis functions, while the train-then-prune variant is initialized with $10$ basis functions and pruned afterward. 

The mean squared error of the two algorithms is shown in Fig. \ref{fig:kepler_mse}. A representative trajectory from the two algorithms is shown in Fig. \ref{fig:kepler_analysis}. Streamplots of the selected basis functions after training are shown in Fig. \ref{fig:streamplot_prog} and Fig. \ref{fig:streamplot_prune}. 

\end{document}